\def\eqref#1{equation~\ref{#1}}
\def\1{\bm{1}}
\def\vb{{\bm{b}}}
\def\vt{{\bm{t}}}
\def\vx{{\bm{x}}}
\def\mA{{\bm{A}}}
\def\mB{{\bm{B}}}
\def\mD{{\bm{D}}}
\def\mJ{{\bm{J}}}
\def\mX{{\bm{X}}}
\def\mY{{\bm{Y}}}
\DeclareMathAlphabet{\mathsfit}{\encodingdefault}{\sfdefault}{m}{sl}
\SetMathAlphabet{\mathsfit}{bold}{\encodingdefault}{\sfdefault}{bx}{n}
\newcommand{\R}{\mathbb{R}}
\newcommand{\eat}[1]{\ignorespaces}
\newcommand{\journal}[1]{} % text that is too long for short conf paper, but would be good for journal
\newcolumntype{H}{>{\setbox0=\hbox\bgroup}c<{\egroup}@{}}
\newcolumntype{R}[1]{>{\RaggedLeft\arraybackslash}} %p{#1}}
\newcolumntype{L}[1]{>{\RaggedRight\arraybackslash}} %p{#1}}
\newtheorem{corollary}{\bfseries{Corollary}}
\newtheorem{lemma}{\hspace{-1em}\bfseries{Lemma}}
\newtheorem{theorem}{Theorem}[section]
\newtheorem{prop}[theorem]{Proposition}
\newtheorem{definition}{Definition}
\DeclareMathOperator{\hugeE}{\mbox{\huge\raise-0.3ex\hbox{E}}}
\DeclareMathOperator{\p}{\mathbb{P}}
\DeclareMathOperator{\hugep}{\mbox{\huge\raise-0.3ex\hbox{$\p$}}}
\newcommand{\RR}{\mathbb{R}}
\providecommand{\mA}{\ensuremath{\mat{A}}}
\providecommand{\mB}{\ensuremath{\mat{B}}}
\providecommand{\mD}{\ensuremath{\mat{D}}}
\providecommand{\mJ}{\ensuremath{\mat{J}}}
\providecommand{\mX}{\ensuremath{\mat{X}}}
\providecommand{\mY}{\ensuremath{\mat{Y}}}
\providecommand{\vb}{\ensuremath{\vec{b}}}
\providecommand{\vt}{\ensuremath{\vec{t}}}
\providecommand{\vx}{\ensuremath{\vec{x}}}
\DeclareMathAlphabet{\mathbcal}{OMS}{cmsy}{b}{n}
\newcommand{\Cam}[1]{}
\newcommand{\sud}[1]{}
\title{Direct Embedding of Temporal Network Edges via Time-Decayed Line Graphs}
\author[1]{Sudhanshu Chanpuriya}
\author[2]{Ryan A. Rossi}
\author[2]{Sungchul Kim}
\author[2]{Tong Yu}
\author[2]{\\Jane Hoffswell}
\author[2]{Nedim Lipka}
\author[2]{Shunan Guo}
\author[1]{Cameron Musco}
\affil[1]{University of Massachusetts Amherst, \texttt{\{schanpuriya,cmusco\}@cs.umass.edu}} 
\affil[2]{Adobe Research, \texttt{\{ryrossi,sukim,tyu,jhoffs,lipka,sguo\}@adobe.com}}
\begin{document}

\maketitle

\begin{abstract}
Temporal networks model a variety of important phenomena involving timed interactions between entities.
Existing methods for machine learning on temporal networks generally exhibit at least one of two limitations. 
First, time is assumed to be discretized, so if the time data is continuous, the user must determine the discretization and discard precise time information.
Second, edge representations can only be calculated indirectly from the nodes, which may be suboptimal for tasks like edge classification.
We present a simple method that avoids both shortcomings: construct the \emph{line graph} of the network, which includes a node for each interaction, and weigh the edges of this graph based on the difference in time between interactions. From this derived graph, edge representations for the original network can be computed with efficient classical methods.
The simplicity of this approach facilitates explicit theoretical analysis: we can constructively show the effectiveness of our method's representations for a natural synthetic model of temporal networks.
Empirical results on real-world networks demonstrate our method's efficacy and efficiency on both edge classification and temporal link prediction.
\end{abstract}

% \Cam{Do we have a good example where discretization really hurts a model? I.e., can we justify that its a valauble lmiitation to address?} \Dan{Such an example probably exists, but I do not know of one.}

\section{Introduction}\label{sec:intro}

Temporal networks, which are graphs augmented with a time value for each edge,  model a variety of important phenomena involving timed interactions between entities, including financial transactions, flights, and web browsing. 
Common tasks for machine learning on temporal networks include classification of the temporal edges, as well as temporal link prediction, which involves predicting future links given some links in the past.
These tasks yield various applications, such as recommendation systems~\citep{zhou2021temporal} and detection of illicit financial transactions~\citep{pareja2020evolvegcn}.
As with most machine learning for graphs, the key to learning for temporal networks is creating effective vector representations, also called embeddings, for the network's components, namely the nodes and edges. These embeddings can either be made as part of an end-to-end framework, or created then passed to off-the-shelf classifiers for downstream tasks; for example, for the edge classification task, a logistic regression classifier can be trained using the training edges' embedding vectors and class labels, then applied at inference time on the test edges' vectors.

The node embedding task, in particular, has seen great interest, and many methods for `static' (i.e., non-temporal) networks have been proposed over the years~\citep{ belkin2001laplacian,perozzi2014deepwalk,grover2016node2vec}.
Edge embedding has seen less interest; there are some exceptions~\citep{li2017edge,bandyopadhyay2019beyond}, but generally, edge embeddings are created by first making node embeddings, then aggregating them, e.g., by taking the entrywise product of the two endpoint nodes' embeddings. There are a wide variety of methods for temporal network embedding, including ones based on matrix/tensor factorization~\citep{dunlavy2011temporal,li2017attributed,zhang2018timers}, random walks~\citep{yu2018netwalk,nguyen2018continuous}, graph convolutional networks~\citep{pareja2020evolvegcn}, and deep autoencoders~\citep{goyal2018dyngem,rahman2018dylink2vec,goyal2020dyngraph2vec}, but generally, these methods can be seen as variants of those for static networks. With some exceptions~\citep{trivedi2017know,nguyen2018continuous,trivedi2019dyrep}, these methods generally do not work directly with the continuous-valued times of temporal edges, but rather require that the times are discretized, yielding a sequence of static graphs.
% \Cam{I don't love these dots. I would just remove. Looks like we didn't finish adding citations. You could start with with c.f. instead if you want} 

These prior methods present some limitations. First, since most datasets have continuous-timed edges, the assumption that time is discretized requires the user to manually determine the discretization and discard precise time information. Second, prior methods generally return representations of nodes rather than edges, so edge representations can only be calculated indirectly from the nodes, which may be suboptimal for tasks like edge classification. Given that timestamps are associated with edges rather than nodes, and that  there are few public datasets where the nodes rather than edges are associated with a time-series of attributes or classes, it is intuitively more natural to derive edge embeddings directly.
% There are not any public graph datasets where we have a stream of timestamped edges along with a time-series of node attribute values for each node. So why node embeddings then if all the downstream tasks are related to edges?

We present a simple but novel framework to address these issues: 
% Our method focuses on representing \emph{topological and temporal locality} of edges as follows: 
construct the line graph of the network, which converts each edge (timed interaction) of the network into a node, and connects interactions that share an endpoint node. Then, set the edge weights in this line graph based on differences in time between interactions, with interactions that occur closer together in time being connected with higher weights. From this derived graph, which directly represents \emph{topological proximity} (i.e., adjacency of edges) and \emph{temporal proximity}, temporal edge representations for the original network can be computed and exploited with efficient classical methods.
To our knowledge, ours is the first method that directly forms embeddings for continuous-time temporal edges without supervision and without aggregating node embeddings - see Table~\ref{tab:novelty-matrix}. Our method is significantly simpler than recent prior work, particularly compared to deep methods, allowing for more direct theoretical analysis: we propose the temporal stochastic block model (TSBM), which naturally extends the well-known stochastic block model (SBM) for static networks to temporal networks, and we show that our method can exploit time and community information in TSBMs to form effective representations.
Practically, our method's simplicity makes it easy to implement, yet it is accurate and efficient: in experiments on five benchmark real-world temporal networks, our approach achieves superior predictive and runtime performance relative to prior methods.

\begin{table}[]
\setlength\extrarowheight{2pt}
\centering
\scriptsize
\begin{tabular}{c|c|c|} 
% \cline{2-3}
                                    %   & Node representation-based                                                       & Edge representation-based \\ \hline
                                      & \begin{tabular}[c]{@{}c@{}}\\[-5pt] \sc Node Representation-based\\[-5pt] \\\end{tabular}                                                       & \begin{tabular}[c]{@{}c@{}}\\[-5pt] \sc Edge Representation-based\\[-5pt] \\\end{tabular} \\ \hline
\multicolumn{1}{c|}{\sc Discrete Time}   & \begin{tabular}[c]{@{}c@{}}
TIMERS~\citep{zhang2018timers}\\ EvolveGCN~\citep{pareja2020evolvegcn}\\ $\cdots$\end{tabular} & DyLink2Vec~\citep{rahman2018dylink2vec}                \\ \hline
\multicolumn{1}{c|}{\sc Continuous Time} & \begin{tabular}[c]{@{}c@{}}CTDNE~\citep{nguyen2018continuous}\\ DyRep~\citep{trivedi2019dyrep}\\ $\cdots$\end{tabular}    & 
\cellcolor{green!25}
\textbf{This work}                  \\ \hline
\end{tabular}
\vspace{5pt}
\caption{Problem studied in this work.}
\label{tab:novelty-matrix}
\end{table}

% \begin{wrapfigure}{r}{0.45\textwidth}
% \vspace{-30mm}
% \begin{center}
% \includegraphics[width=1.0\linewidth]{}
% \end{center}
% \caption{Problem studied in this work.}
% % \caption{\ryan{Just adding for placeholder, can likely make actual table or make small, and put inline.}}
% % Despite that temporal networks are a stream of timestamped edges, there has not been any work focused on deriving edge embeddings in continuous-time dynamic networks (Figure~\ref{fig:novelty-matrix}). In this work, we fill this gap by studying this important novel setting and show that edge embeddings are more appropriate for temporal networks that consist of timestamped edges.
% \label{fig:novelty-matrix}
% \end{wrapfigure}

% \begin{comment}
% \begin{figure}[h!]
% \centering
% \includegraphics[width=0.50\textwidth]{images/novelty-matrix.pdf}
% \caption{\ryan{Just adding for placeholder, can likely make actual table or make small, and put inline.}}
% % Despite that temporal networks are a stream of timestamped edges, there has not been any work focused on deriving edge embeddings in continuous-time dynamic networks (Figure~\ref{fig:novelty-matrix}).
% \label{fig:novelty-matrix}
% \end{figure}
% \end{comment}

\section{Methodology}\label{sec:methodology}

% \subsection{Approach}\label{sec:approach}

% We begin with the format of the data that we are given:
Our approach starts with a sequence of timestamped edges:
\begin{definition}[Temporal Graph]
A temporal graph $G=(V,E)$ is a set of vertices $V$ and a set of temporal edges $E$, where $E \subseteq V \times V \times \RR$. $t$ is the time of the temporal edge $(u,v,t)$.
\end{definition}

Our approach centers around our proposed notion of `time-decayed line graphs' (TDLGs) which are derived from temporal graphs. Our notion of TDLGs extends the well-established idea of line graphs, which are derived from static (i.e., non-temporal), undirected graphs; the line graph of an undirected, unweighted graph is another undirected, unweighted graph that represents adjacencies between edges in the original graph. We propose to incorporate time information by using it to set the weights of the resulting line graph.
Specifically, given a temporal graph $G$, we construct a TDLG, which is a static weighted line graph $L_\text{TD}(G)$, as follows in Definition~\ref{def:tdlg}.
\begin{definition}[Time-Decayed Line Graph] \label{def:tdlg}
Given a temporal graph $G=(V,E)$, the associated time-decayed line graph is $L_\text{TD}(G)=(V_L,E_L,w)$, where $V_L = E$, $E_L = \{ \left( (u,v,t_1), (v,z,t_2) \right) : (u,v,t_1),(v,z,t_2) \in E \}$, and the edge weight function $w:E_L \to \RR_{+}$ evaluated on edge $\left( (u,v,t_1), (v,z,t_2) \right)$ is given by 
$ \exp\left(- \frac{1}{2 \sigma_t^2} (t_1 - t_2)^2 \right) $
for some fixed $\sigma_t > 0$.
% \Cam{I think the sentence is broken? No verb. Also I think we hsould just inline the $\exp(...)$ weight formula. Saves space and avoids looking exactly like (1).}
\end{definition}
Thus, proximity of two temporal edges in the TDLG incorporates both topological proximity in the original graph as well as proximity in time. The parameter $\sigma_t$ controls how quickly proximity in the TDLG decays as difference in time grows. For a graph with $n$ nodes and $m$ edges, we can construct the weighted adjacency matrix $\mA \in \RR_+^{m \times m}$ of the TDLG as follows. Given the incidence matrix $\mB \in \{0,1\}^{n \times m}$, each column vector of which is $n$-dimensional, corresponds to an edge, and is $1$ for the edge's two endpoint nodes and 0 elsewhere; and also given the vector of times of the temporal edges $\vt \in \RR^m$:
\begin{equation}\label{eqn:tdlg_adj}
    \mA_{ij} = \left( \vb_i^\top \vb_j \right) \cdot \exp\left(- \frac{(t_i - t_j)^2}{2 \sigma_t^2} \right),
\end{equation}
where $\vb_i$ and $\vb_j$ denote columns $i$ and $j$ of $\mB$. Note that $\mA$ is symmetric. 
% \Cam{Have we defined dot product notation? Maybe should explain since there are multiple possible interpretations of $b_i \cdot b_j$.}
While self-loops in line graphs are generally removed, we keep them here for simplicity. The choice of Gaussian weight decay is somewhat arbitrary, and, in theory, the TDLG concept would also work with, e.g., Laplacian weight decay. We use Gaussian decay since it is well known and effective in practice.

We now describe our approach for temporal edge embedding, classification, and link prediction.

\paragraph{Temporal edge embedding and classification} For downstream tasks, we seek temporal edge embeddings, that is, an informative real-valued vector for each temporal edge that can be provided to a classifier. For temporal edge $i$, we simply return the $i^\text{th}$ row of the TDLG adjacency matrix $\mA$. This returns an $m$-dimensional sparse vector as the feature vector. Note that we could reduce the dimensionality of these vectors, e.g., via eigendecomposition, but we find that this is not necessary.
We use the standard supervised learning pipeline for edge classification. Given training data comprising a set of temporal edges and the class labels of some fraction of these edges, the TDLG matrix $\mA$ is created using all of the edges, then a logistic regression classifier is trained using the edges for which the classes are known and the corresponding rows of $\mA$ as feature vectors. After this, the classifier can be used on the remaining rows to make predictions for the remaining edges.

\paragraph{Temporal link prediction} We describe the full experimental setup for temporal link prediction in Section~\ref{sec:exp}, but essentially, it amounts to binary temporal edge classification where the classes are $+1$ for true/positive edges $(u_i,v_i,t_i)$ and $-1$ for false/negative edges $(u'_i,v'_i,t'_i)$. The difference is that for temporal link prediction, we must additionally be able to form representations for negative edges, as well as for test edges (which are not given when training the classifier). Let subscripts $r$ and $e$ denote training and test edges, respectively. We deal with negative edges exactly the same as positive edges besides the class labels: the positive and negative training edges are concatenated, producing the training incidence matrix $\mB_r \in \{ 0,1 \}^{n \times m_r}$ and times vector $\vt_r \in \RR^{m_r}$, from which we construct the training TDLG adjacency matrix $\mA_{rr} \in \RR^{m_r \times m_r}$ according to Equation~\ref{eqn:tdlg_adj}. 
% \Cam{So the negative edges actually change the structure of the line graph? Also what does $r$ stand for here?} \sud{Yes, and I added a comment about r and e.}
% (For simplicity, we forgo normalization for link prediction.) 
The classifier can then be trained using the rows of $\mA_{rr}$ as feature vectors and the edge labels (i.e., positive/negative edge). Now feature vectors for the test edges must each be $m_r$-dimensional and consider only proximity to training edges. Letting the test incidence matrix and test times vector be $\mB_e \in \{ 0,1 \}^{n \times m_e}$ and $\vt_e \in \RR^{m_e}$, the test edge feature vectors will be the rows of the matrix $\mA_{er} \in \RR^{m_e \times m_r}$, the $(i,j)$-th entry of which is given by
\begin{equation*}
    \left( \vb_{e\:i} \cdot \vb_{r\:j} \right) \cdot \exp\left(- \frac{(t_{e\:i} - t_{r\:j})^2}{2 \sigma_t^2} \right),
\end{equation*}
which can be passed to the classifier for inference.

% \Cam{I would maybe reorder this or rename the paragraph headings. The first para isn't really about classification, it is about how the embeddings are produced. So you can describe that and say that the embeddings naturally can be used for classification. Then you can say that for link prediction, we have to produce embeddings 1) for negative edges and 2) for unseen test edges and describe how that is done. }

\section{Demonstrative Example}
\label{sec:demo}

To demonstrate the power of our TDLG method at capturing latent structure in temporal graphs, we introduce a simple, natural model for temporal networks based on the stochastic block model (SBM)~\citep{holland1983stochastic}. Our model, called temporal SBM, comprises the union of several SBMs. A normal distribution is attached to each SBM, and the edges drawn from an SBM are given a time which is sampled from its distribution. The formal defintion follows in Definition~\ref{def:tsbm}.

\begin{definition}[Temporal SBM]\label{def:tsbm}
Consider a set of $n$ nodes partitioned into two equal-sized communities $\{U,V\}$. For some integer $\Delta > 0$ and real numbers $0<\alpha_1,\alpha_2<1$, construct a random temporal edge set over this graph as follows: 
Let there be $\frac{\Delta \cdot n}{2}$ temporal edges in each of two time periods; the times associated with edges in each time period are drawn from the normal distributions $\mathcal{N}(\mu_{1}, \sigma_{1}^2)$ and $\mathcal{N}(\mu_{2}, \sigma_{2}^2)$, respectively. 
Of the edges in time period 1, let $(1-\alpha_{1}) \cdot \frac{\Delta \cdot n}{2}$ edges be drawn uniformly at random from $U \times V$, and let the remaining $\alpha_{1} \cdot \frac{\Delta \cdot n}{2}$ edges be drawn half each from $U \times U$ and $V \times V$. Similarly, of the edges in time period 2, let $(1-\alpha_{2}) \cdot \frac{\Delta \cdot n}{2}$ edges be drawn from $U \times V$, and the remaining $\alpha_{2} \cdot \frac{\Delta \cdot n}{2}$ edges be drawn half each from $U \times U$ and $V \times V$.
\end{definition}
% \Cam{Should $\sigma_1,\mu_1,\sigma_2,\mu_2$ appear in the Temporal SBM notation? Seems a bit messy but also its weird if we include some the parameters in that notation and not others.} 

Here $\Delta$ is the expected degree of each node in each of the two SBMs (so the total degree of each node is $2\cdot \Delta$), and $\alpha_1,\alpha_2$ represent the fraction of edges in each SBM which are intra-community as opposed to inter-community.
Note that this definition can generalize straightforwardly to a union of an arbitrary number of SBMs, potentially with unequal community sizes. 

We first construct a TSBM where it is not possible to distinguish any community structure if one ignores temporal information. This TSBM is visualized in Figure~\ref{fig:synth_data}. The graph has $n=100$ nodes, with expected degrees $\Delta=40$, and the time distributions of the two SBMs are $\mathcal{N}\left(-1, (\sfrac{1}{2})^2 \right)$ and $\mathcal{N}\left(+1, (\sfrac{1}{2})^2 \right)$; the distributions have the same variance, but one occurs earlier on average.
% Further, we set $p_1=q_2=0.9$ and $p_2=q_1=0.1$; 
Further, we set $\alpha_1=\sfrac{9}{10}$ and $\alpha_2=\sfrac{1}{10}$, so that the mean fraction of intra-community edges is $\tfrac{\alpha_1+\alpha_2}{2} = \sfrac{1}{2}$; this ensures that it is impossible to retrieve community structure without considering time, as is best illustrated in Figure~\ref{fig:synth_data} in the union of the two SBMs, which is simply an Erdős–Rényi graph.

\begin{figure}[h!]
\centering
\includegraphics[width=0.99\textwidth]{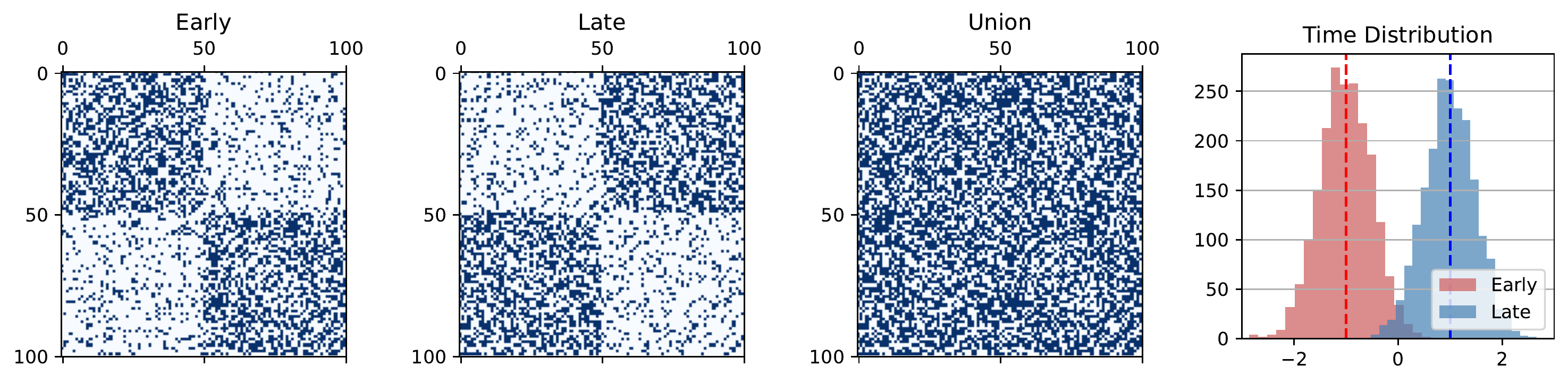}
\caption{The motivating synthetic temporal network. The adjacency matrix of the early and late edges, as well as their union, then the distribution of times separated by the early and late edges. Note that the early edges are mostly in-group, whereas the late edges are mostly out-group. 
% \tong{In the last sub figure, may add the unit of time (in seconds?) May add legends so it is easier to understand the meaning of red and blue.}
}
\label{fig:synth_data}
\end{figure}

We will show that our TDLG method can utilize the time information to recover the node communities.
% Note that node community assignments are constant in the TSBM, whereas the distribution of community affinities can shift over time.
% Since SBMs and hence TSBMs are based on ground-truth node community assignments, whereas our method produces embeddings for temporal edges, we must 
Since our method produces embeddings for temporal edges, to demonstrate recovery of node communities, we must aggregate the edge embeddings into node embeddings. 
Node embeddings can be straightforwardly constructed from edge embeddings as follows: for each node, return the mean of the embeddings of all edges incident on that node. This is described formally in Definition~\ref{def:mene}:
\begin{definition}[Mean-Edge Node Embeddings]~\label{def:mene}
Suppose a network has $n$ nodes and $m$ edges, with incidence matrix $\mB \in \RR^{n \times m}$. Let $\tilde{\mB}$ be the matrix that results from dividing each row of $\mB$ by its sum. Given a matrix of $k$-dimensional edge embeddings $\mY \in \RR^{m \times k}$, produce a matrix of node embeddings $\mX \in \RR^{n \times k}$ via the matrix product $\mX = \tilde{\mB} \mY$.
\end{definition}
In Figure~\ref{fig:synth_method}, we show the result of applying our method to this TSBM. In particular, we construct the TDLG with $\sigma_t=\sfrac{1}{2}$ 
% \Cam{why do we respecify $\sigma_t$ here but not the other parameters?} \sud{The other parameters are for TSBM. TDLG only has this parameter.}
and show the TDLG adjacency matrix, temporal edge embeddings generated from this matrix, and node embeddings generated by Definition~\ref{def:mene}. 
Note that while we generally use the raw rows of the TDLG adjacency matrix as edge embeddings (which produces sparse vectors), only in this section, for visualization purposes, we instead use dense eigenvector embeddings.
The embedding visualizations show that our method is recovering the community structure:
The edge embeddings roughly separate the 6 kinds of edges (i.e., edges in $U\times U$, $V \times V$, and $U \times V$ occurring in each of the two time periods), and
% (An edge can occur between node communities $U$ and $V$, within community $U$, or within community $V$, and each of these 3 kinds can occur in either the early or late time period.)
further, the node embeddings linearly separate the two node communities $U$ and $V$.

\begin{figure}[h!]
\centering
\includegraphics[width=0.99\textwidth]{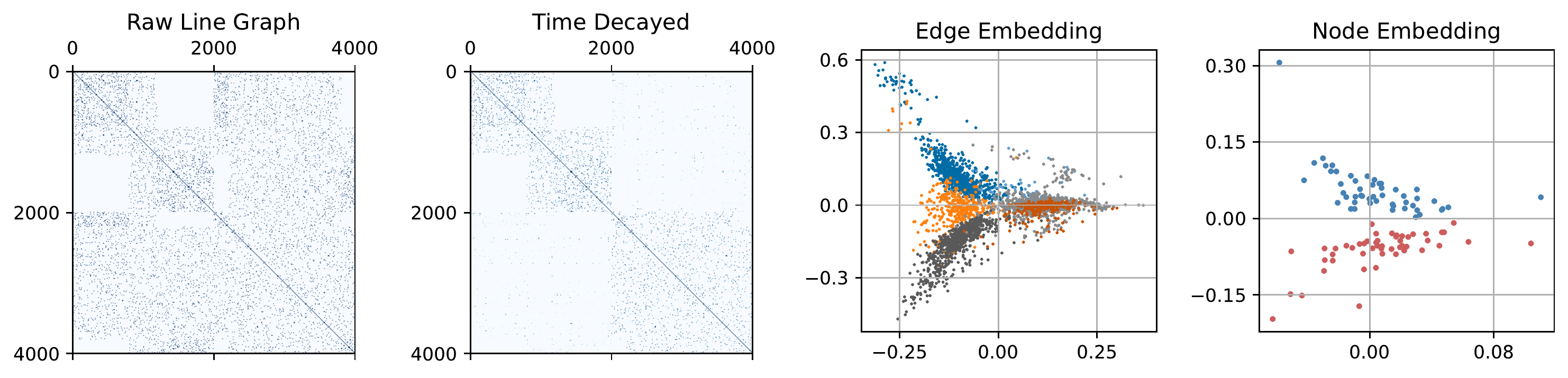}
\caption{Results of applying our method to the motivating synthetic network of Figure~\ref{fig:synth_data}. The line graph is ordered so that the early edges appear first. The embedding plots show top second and third eigenvectors (in terms of magnitude of eigenvalue) of the TDLG adjacency matrix. Note that the edge types are roughly separated in the edge embeddings, and the node types are linearly separated in the node embeddings.}
\label{fig:synth_method}
\end{figure}

\section{Theoretical Discussion} \label{sec:theory}

We now discuss our theoretical results, which concern the action of our TDLG method on the proposed TSBM family of synthetic graphs.
We present these results not only to show the power of the TDLG approach, but also to contrast with deep methods, the complexity of which often precludes explicit analysis.
In particular, under certain strong assumptions, our approach's simplicity allows us to give the exact expectation for the TDLG adjacency matrix $\mA$ for TSBMs.
Further, we can show constructively that the TDLG method is able to preserve and disentangle the node community structure of the TSBM: specifically, taking the temporal edge embeddings returned by our method and aggregating them into node embeddings via averaging over a node's incident edges (as in Definition~\ref{def:mene}) yields node embeddings which can distinguish the two node communities. 
The assumptions for these two results are as follows: we analyze using the expected TDLG adjacency matrix and the expected incidence matrix, and we assume zero time variance within time periods of the TSBM; these assumptions all simplify the analysis by reducing variance, though perhaps via concentration bounds one could extend such results to the sampled setting~\citep{spielman2012spectral}.
Also, we will consider TSBMs in the limit as $n\to\infty$ for simplicity of the resulting constant values, but the following analysis would carry through in general with different values.

\begin{prop}[Structure of TDLG Adjacency Matrix for Temporal SBMs]\label{prop:tsbm_adj}
Suppose $\mA \in \RR^{m \times m}$ is the expected TDLG adjacency matrix of a Temporal SBM graph with node communities $\{U,V\}$, time periods $\{1,2\}$, and zero time variance ($\sigma_{1}^2=\sigma_{2}^2=0$). $\mA$ has a $6 \times 6$ block structure where each block is constant-valued, that is, a multiple of an all-ones matrix $\mJ$ of some dimensionality. Letting subscripts denote time period, the block matrix is indexed by 6 edge sets: $(U \times U)_1$, $(V \times V)_1$, $(U \times V)_1$, $(U \times U)_2$, $(V \times V)_2$, and $(U \times V)_2$. In the limit as $n\to \infty$, the constant values of each block are given by the Kronecker product
%
% \NiceMatrixOptions{letter-for-dotted-lines=V}
\NiceMatrixOptions{code-for-first-row = \scriptstyle,code-for-first-col = \scriptstyle }
\NiceMatrixOptions{columns-width=25pt}
\[
% \begin{pNiceArray}{c:c:c}[first-col]
\begin{pNiceArray}{c|c|c}[first-col]
% & C_1 & \Cdots & & C_4\\
U \times U & \sfrac{8}{n} & 0 & \sfrac{4}{n} \\\hline%\hdashline
V \times V & 0 & \sfrac{8}{n} & \sfrac{4}{n} \\\hline
U \times V & \sfrac{4}{n} & \sfrac{4}{n} & \sfrac{4}{n} 
\end{pNiceArray}
\quad \otimes \quad
\begin{pNiceArray}{c|c}[first-col,nullify-dots]
% & C_1 & \Cdots & & C_4\\
1 & 1 & \lambda\\\hline%\hdashline
2 & \lambda & 1
\end{pNiceArray} ,
\]
where $\gamma = \exp\left(-\tfrac{(\mu_1 - \mu_2)^2}{2\sigma_t^2}\right)$.
% \Cam{I think we should say that the block matrix is indexed by 6 edges sets, and name these edge sets.}
\end{prop}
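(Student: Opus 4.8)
The plan is to reduce the claim to a short expected-overlap computation, exploiting two features of the setting: under zero time variance the Gaussian weight of Definition~\ref{def:tdlg} is deterministic and depends on an edge only through which of the two time periods it lies in, and the edges within each of the six classes are exchangeable. First I would index the $m=\Delta n$ temporal edges so that they are grouped into the six classes $(U\times U)_p,(V\times V)_p,(U\times V)_p$ for $p\in\{1,2\}$, with sizes fixed by $\Delta,\alpha_1,\alpha_2$ as in Definition~\ref{def:tsbm}. Write $S_i$ for the (unordered) endpoint pair of edge $i$, so that $\vb_i^\top\vb_j=|S_i\cap S_j|$, and write $p_i$ for its period. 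Since $\sigma_1^2=\sigma_2^2=0$, every period-$p$ edge has time exactly $\mu_p$, so the Gaussian factor equals the deterministic value $1$ when $p_i=p_j$ and $\gamma=\exp(-(\mu_1-\mu_2)^2/2\sigma_t^2)$ otherwise. Hence, for $i\ne j$,
\[
\mathbb{E}[\mA_{ij}]
=\mathbb{E}\big[|S_i\cap S_j|\big]\cdot
\begin{cases}1,& p_i=p_j,\\[2pt]\gamma,& p_i\ne p_j.\end{cases}
\]
The right-hand side depends on $(i,j)$ only through their class labels, which is exactly the assertion that $\mathbb{E}[\mA]$ has a $6\times6$ constant-block structure off the diagonal, and the period part is precisely the $\bigl(\begin{smallmatrix}1&\gamma\\\gamma&1\end{smallmatrix}\bigr)$ factor of the claimed Kronecker product. (Diagonal entries are identically $\vb_i^\top\vb_i=2$; this is consistent with the stated form in the $n\to\infty$ regime, since each block has side length $\Theta(n)$ and so the diagonal occupies a vanishing fraction of it.)

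It then remains to evaluate $\mathbb{E}[|S_i\cap S_j|]$ for the six unordered pairs of community-pair types. Assuming distinct edge slots are independent, so that $\mathbb{E}[|S_i\cap S_j|]=\sum_v\Pr[v\in S_i]\Pr[v\in S_j]$, I would plug in the endpoint marginals: a same-community edge contains each of the $n/2$ nodes of its community with probability $4/n$ and no node of the other community, while a $(U\times V)$ edge contains each node of $U$ and each node of $V$ with probability $2/n$. Summing over nodes gives $8/n$ for $(U\times U,U\times U)$ and $(V\times V,V\times V)$, $0$ for $(U\times U,V\times V)$, and $4/n$ for each of the three pairs that involve a $(U\times V)$ class — exactly the $3\times3$ topological factor of the Kronecker product. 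Multiplying the topological and temporal factors entrywise over the classes reproduces the stated matrix; note that $\Delta$ disappears because per-entry expectations depend only on the classes, not on the number of edge slots.

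The computations are elementary, so the only genuine obstacle is bookkeeping around the sampling conventions of Definition~\ref{def:tsbm}: whether the edges within a class are drawn with or without replacement, and whether $U\times U$ is taken to admit self-loops. These introduce $O(1/n^2)$ corrections to the pairwise endpoint marginals and an $O(1/n)$ fraction of degenerate (self-loop) edges, which is precisely why the block values are asserted in the limit $n\to\infty$; I would make the modeling assumption that distinct edge slots are independent uniform draws so these corrections are lower order, after which the argument is just exchangeability plus linearity of expectation.
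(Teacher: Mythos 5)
Your proposal is correct and follows essentially the same route as the paper's proof: factor each entry into a deterministic temporal factor ($1$ or $\gamma$, from zero time variance) times the expected endpoint overlap $\mathbb{E}[\vb_i^\top \vb_j]$, and evaluate the latter by elementary counting in the $n\to\infty$ limit, yielding the same $\sfrac{8}{n}$, $\sfrac{4}{n}$, $0$ values. The only cosmetic difference is that you sum per-node inclusion probabilities while the paper counts matching endpoint pairs at $\sfrac{2}{n}$ each; these are the same computation, and your extra remarks on the diagonal and sampling conventions are consistent with the paper's limiting statement.
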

Note that writing $\mA$ as a Kronecker product involves some abuse of notation, and 
the actual sizes of the edge sets (and hence the sizes of the blocks) are as given in Definition~\ref{def:tsbm}. In the order of Proposition~\ref{prop:tsbm_adj}, these sizes are $\alpha_1 \cdot \tfrac{\Delta \cdot n}{4}$, $\alpha_1 \cdot \tfrac{\Delta \cdot n}{4}$, $(1-\alpha_1) \cdot \tfrac{\Delta \cdot n}{2}$, $\alpha_2 \cdot \tfrac{\Delta \cdot n}{4}$, $\alpha_2 \cdot \tfrac{\Delta \cdot n}{4}$, and $(1-\alpha_2) \cdot \tfrac{\Delta \cdot n}{2}$.
\begin{proof}
We calculate an entry $\bm{A}_{ij}$ of this matrix. We first consider the effect of the time decay, then the effect of edge adjacency (i.e., $\vb_i^\top \vb_j$).
% By Definition~\ref{def:tsbm} for the TSBM, the time period of edge $i$ (whether it occurs in period $1$ or $2$) is independent of the node communities between which it occurs ($U \times U$, $V \times V$, or $U \times V$). \Cam{Is this correct? Isn't each edge actually in a fixed time period? I.e. we don't randomly assign an edge to one of two time periods.}
% By this independence and Definition~\ref{def:tdlg} for the TDLG, we have 
% \[\mA_{ij} = \mathbb{E}\left[\vb_i \cdot \vb_j\right] \cdot \mathbb{E}\left[\exp\left(-\tfrac{(t_i-t_j)^2}{2\sigma_t^2}\right)\right],\]
% meaning we can separately consider the two terms for time decay and edge adjacency.
% \Cam{I don't think we actually used independence here. What we really did was use that depending on which case you are in, $exp(...)$ is either $1$ or $\gamma$. And then we computed the expectation *conditioned on what case you are in*. This is why the computation is different in different parts of the matrix.}
Given the assumption of zero time variance, there are only two possible values for each temporal edge's time, $\mu_1$ and $\mu_2$. This means effect of time decay on each entry of $\mA$ is multiplication by either $1$, if $i$ and $j$ occur at the same time, or otherwise $\gamma$. 
% \Cam{This is more correct. Note that $\mathbb{E}\left[\exp\left(-\tfrac{(t_i-t_j)^2}{2\sigma_t^2}\right)\right$ (not totally sure how it is defined) would be something like $(1+\gamma)/2$ or something.}

Second, we calculate the expected value of $\vb_i^\top \vb_j$, that is, the expected number of endpoint nodes shared between edges $i$ and $j$.
Suppose both $i$ and $j$ are in $U \times U$. Since there are $\sfrac{n}{2}$ nodes in $U$, the probability of any two random nodes in $U$ being the same is $\sfrac{2}{n}$. There are 4 pairs of endpoints in $U$ between $i$ and $j$; in the assumed $n\to \infty$ limit, the 4 events of these pairs each having the same two nodes tend toward independence, so $\vb_i \cdot \vb_j = 4\cdot \sfrac{2}{n}$ = \sfrac{8}{n}. By similar logic, if $i$ is in $U \times U$ and $j$ is in $U \times V$, there are 2 pairs of endpoints that could be identical (each of $i$'s nodes in $U$ with the single node of $j$ in $U$), so $\vb_i \cdot \vb_j = 2\cdot \sfrac{2}{n}$ = \sfrac{4}{n}. The same holds if both $i$ and $j$ are in $U \times V$. Finally, if $i$ is in $U \times U$ and $j$ is in $V \times V$, there is zero chance of a shared endpoint. Remaining combinations follow by symmetry.
Combining the terms for time decay and edge adjacency, the expected TDLG adjacency matrix has the specified $6 \times 6$ block structure.
\end{proof}

We now state the second result, about the ability to distinguish TSBM node communities using the TDLG method. This result requires that it is not the case that $\alpha_1 = \alpha_2 = \sfrac{1}{2}$; in this case of TSBM, the SBMs at both time periods are Erdős–Rényi, so there is no community structure to recover.
\begin{prop}[TDLG Embedding Recovers Communities from Temporal SBMs]\label{prop:tsbm_recovery}
Suppose \mbox{$\mA \in \RR^{m \times m}$} is the expected TDLG adjacency matrix of a Temporal SBM graph with node communities $\{U,V\}$, time periods $\{1,2\}$, and zero time variance ($\sigma_{1}^2=\sigma_{2}^2=0$). Let $\mX \in \RR^{n \times m}$ be the mean-edge node embeddings resulting from treating the rows of $\mA$ as edge embeddings. 
Assuming that $\gamma \neq 1$ and it is not the case that $\alpha_1 = \alpha_2 = \sfrac{1}{2}$, the node communities are distinguishable in $\mX$.
\end{prop}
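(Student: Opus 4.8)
The plan is to compute the two community embeddings in closed form and show they are distinct; linear separability is then immediate. Because the nodes inside $U$ (and inside $V$) are exchangeable, and because by assumption $\mX=\tilde{\mB}\mA$ is built from the \emph{expected} incidence matrix, every row of $\mX$ indexed by a $U$-node equals a single vector $\vx_U\in\RR^m$, and every $V$-row equals a single vector $\vx_V$. Since the embedding is constant within each community, the two communities are linearly separable in $\mX$ as soon as $\vx_U\neq\vx_V$ (for instance by the hyperplane bisecting the segment $[\vx_U,\vx_V]$), so it suffices to prove $\vx_U\neq\vx_V$.

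By Proposition~\ref{prop:tsbm_adj}, $\mA$ is constant on each of its $6\times 6$ edge-class blocks, so the row of $\mA$ indexed by an edge of class $c$ is the block-constant vector whose value on block $c'$ is the $(c,c')$ entry of $M'=M\otimes T$, where $M$ is the $3\times 3$ community-pattern matrix of Proposition~\ref{prop:tsbm_adj}, $T=\spmat{1 & \gamma\\ \gamma & 1}$, and $\gamma$ is as defined there. Hence, letting $c_U\in\RR^6$ be the row vector whose $c$-th entry is the mass that row $u$ of the row-normalized expected incidence matrix $\tilde{\mB}$ puts on class $c$, we get $\vx_U=c_U M'$ (read as a block-constant vector in $\RR^m$), and likewise $\vx_V=c_V M'$. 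That mass is exactly the fraction of $u$'s expected incident edges lying in class $c$, and an expected-degree count finishes the job: for $u\in U$ each of the two time periods contributes expected degree $\Delta$, split as an $\alpha_i$-fraction in $U\times U$, a $(1-\alpha_i)$-fraction in $U\times V$, and none in $V\times V$, against a total degree of $2\Delta$. In the class order $(U\times U)_1,(U\times U)_2,(V\times V)_1,(V\times V)_2,(U\times V)_1,(U\times V)_2$,
\[
c_U=\tfrac12\,(\alpha_1,\ \alpha_2,\ 0,\ 0,\ 1-\alpha_1,\ 1-\alpha_2),\qquad c_V=\tfrac12\,(0,\ 0,\ \alpha_1,\ \alpha_2,\ 1-\alpha_1,\ 1-\alpha_2).
\]

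Subtracting gives $c_U-c_V=\tfrac12\,(1,-1,0)\otimes(\alpha_1,\alpha_2)$ in this (Kronecker) ordering, so by the mixed-product property of the Kronecker product,
\[
\vx_U-\vx_V=(c_U-c_V)M'=\tfrac12\big((1,-1,0)M\big)\otimes\big((\alpha_1,\alpha_2)T\big)=\tfrac4n\,(1,-1,0)\otimes(\alpha_1+\gamma\alpha_2,\ \gamma\alpha_1+\alpha_2),
\]
using $(1,-1,0)M=(\sfrac8n,-\sfrac8n,0)$ and $(\alpha_1,\alpha_2)T=(\alpha_1+\gamma\alpha_2,\ \gamma\alpha_1+\alpha_2)$. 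Since $0<\alpha_1,\alpha_2<1$ and $0<\gamma\le1$, the right Kronecker factor is entrywise positive while the left factor is nonzero; hence $\vx_U\neq\vx_V$, and the communities are distinguishable in $\mX$. The hypotheses $\gamma\neq1$ and ``not $\alpha_1=\alpha_2=\sfrac12$'' serve to exclude the regimes in which the TSBM carries no community structure at all, so as to make the statement non-vacuous — the nonvanishing above does not actually require them.

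I do not anticipate a genuine obstacle in this argument: the only real computation is the expected-degree bookkeeping that produces $c_U$ and $c_V$, and the remainder is a single line of Kronecker algebra. The real difficulty would surface only if one dropped the ``expected matrices / zero time variance'' idealization, in which case one would need a concentration argument — showing the sampled $\mA$ and $\mB$ stay close enough to their expectations (spectrally, or row by row) that $\vx_U$ and $\vx_V$ remain separated relative to the within-community spread — i.e.\ a matrix-Bernstein or spectral-sparsification estimate of the kind alluded to just before the statement~\citep{spielman2012spectral}.
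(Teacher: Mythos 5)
Your Kronecker bookkeeping is fine for the object you define, but that object is not the one the paper analyzes, and the divergence happens exactly at the step where you decouple incidence from adjacency. You take $\mX=\E[\tilde{\mB}]\,\E[\mA]$, i.e.\ the product of the two expectations, keeping all columns, including those indexed by edges incident to the node under consideration. The paper's proof instead computes the expectation of the product $\vb_u\mA$ jointly, by counting time-weighted length-two paths $u$--$w$--$z$, and it deliberately restricts to columns indexed by edges within $U'=U\setminus\{u\}$ and $V'=V\setminus\{v\}$. These give genuinely different block values: for a $U$-node on the period-1 intra-$U$ block you get $\tfrac{2}{n}\left((1+\alpha_1)+\gamma(1+\alpha_2)\right)$, whereas the paper gets $\tfrac{2}{n}\left(\alpha_1+\gamma\alpha_2\right)$; the extra $(1+\gamma)$ in your value is precisely the contribution you lose track of by ignoring the correlation between $B_{u,e}$ and $A_{e,e'}$ (an edge $e$ conditioned to touch $u$ has a very different adjacency profile than a generic edge of its class).

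The consequence is not cosmetic. In the paper's computation the $U$--$V$ difference on the period-1 and period-2 intra-community blocks is $(2\alpha_1-1)+\gamma(2\alpha_2-1)$ and $(2\alpha_2-1)+\gamma(2\alpha_1-1)$, which vanish exactly when $\alpha_1=\alpha_2=\sfrac{1}{2}$, or when $\gamma=1$ and $\alpha_1+\alpha_2=1$; that is why the hypotheses are in the statement, and the paper explicitly asserts (Section 4 and the last line of its proof) that the embeddings become indistinguishable in those regimes. Your conclusion that the separation never vanishes, so the hypotheses are only for ``non-vacuity,'' contradicts this, and it should have been a red flag: at $\alpha_1=\alpha_2=\sfrac{1}{2}$ the TSBM is an Erd\H{o}s--R\'enyi graph with i.i.d.\ times, exchangeable over all nodes, so no embedding procedure can genuinely separate $U$ from $V$; the nonzero difference you obtain is an artifact of the decoupled expectation, in which a $U$-node's row automatically places more mass on coordinates labelled $U\times U$ merely because those labels are defined via the hidden partition. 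Relatedly, $\gamma\neq 1$ is not about the model lacking structure: for $\gamma=1$, $\alpha_1+\alpha_2=1$, $\alpha_1\neq\alpha_2$ the model does have community structure (recoverable using time), and the point of the hypothesis is that without time decay the method provably cannot see it. To repair your argument you would need to work with the joint expectation (the 2-path counts), after which the hypotheses become essential rather than decorative.
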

For brevity, the proof is deferred to Appendix~\ref{app:proofs}. It essentially involves the same flavor of counting-based arguments as Proposition~\ref{prop:tsbm_adj}, though it is much more involved.
As part of this proof, we provide a description of part of the expected block structure of the resulting node embeddings $\mX$, similar to the preceding description of the TDLG adjacency matrix $\mA$. When time decay is not applied (i.e., by setting $\gamma=1$), we find that nodes in $U$ and $V$ become indistinguishable in the derived embeddings for TSBMs with $\alpha_1+\alpha_2=1$, as opposed to just when $\alpha_1 = \alpha_2 = 1/2$. These are TSBMs like the one from Figure~\ref{fig:synth_data}, which do have community structure when considering time, but become Erdős–Rényi graphs when ignoring time.

\section{Related Work} \label{sec:related-work}

\paragraph{Node embeddings} Many methods have been proposed over the years for the node embedding task. The best understood are the classical `spectral' methods based on eigendecomposition of the graph's adjacency matrix or Laplacian~\citep{roweis2000nonlinear, belkin2001laplacian}. Perhaps the most famous non-spectral method is DeepWalk~\citep{perozzi2014deepwalk}, which takes random walks on the graph, then, using a nonlinear, probabilistic objective, increases the similarity of the embeddings of nodes which frequently co-occur in the walks. Similar methods include node2vec~\citep{grover2016node2vec}, which adds bias to the random walks, and LINE~\citep{tang2015line}, which increases scalability. GraRep~\citep{cao2015grarep} and NetMF~\citep{qiu2018network} incorporate both matrix factorization like the classical methods and nonlinear objectives like the recent ones, yielding good speed and performance. Unsupervised deep learning approaches, like VGAE~\citep{kipf2016variational} and SDNE~\citep{wang2016structural}, involve deep autoencoders; besides this, there is a wide array of supervised deep models which implicitly form node representations, including graph convolutional networks (GCNs)~\citep{kipf2016semi} and graph attention networks (GATs)~\citep{velivckovic2018graph}.

\paragraph{Edge embeddings}
The task of creating vector representations for each edge, rather than each node, has seen less exploration. A common strategy is to simply create node embeddings, then process them into edge embeddings by aggregating the embeddings of the two endpoints of an edge. In particular, this is done by taking the mean, entrywise product, cross product, or concatenation of the two node embedding vectors. Notably, node2vec uses this strategy for link prediction; \citet{shi2018easing} and \citet{verma2019heterogeneous} are more examples where this is part of a larger framework.
More along the lines of our method, though still not involving temporal networks, \citet{bandyopadhyay2019beyond} produce edge representations in an unsupervised manner by embedding the nodes of the line graph of the original network. 
% \citet{bielak2022attre2vec} build on this work by incorporating edge feature vectors.
The method of \citet{li2017edge} implicitly takes a similar approach by taking a random walk on the edges and iteratively updating edge embeddings, as DeepWalk does for node embeddings.
Some graph deep learning approaches~\citep{monti2018dual,gong2019exploiting} include layers which form edge embeddings, but do so as part of a supervised framework; by contrast, \citet{zhou2018density} propose an unsupervised deep method based on generative adversarial networks, though their approach is fairly complicated.
Also of note, though not directly applicable to the standard or temporal edge setting, several works in the area of knowledge graphs form embeddings of relations between entities~\citep{bordes2013translating,yang2014embedding,chen2019embedding}.

\paragraph{Embeddings for temporal networks}
Many methods for temporal network embedding can be seen as variants of those for static graphs.
\citet{dunlavy2011temporal} propose a variant of factorization-based approaches, intended for a dynamic network comprising snapshots of static graphs at consecutive intervals of time. It directly incorporates time information by stacking adjacency matrices of different time steps and employing tensor factorization.
DANE~\citep{li2017attributed} is another factorization approach, though it only factorizes matrices. To boost efficiency, rather than computing node embeddings from scratch for each snapshot, DANE iteratively updates the embeddings from the previous time step according to the change in the network; TIMERS~\citep{zhang2018timers} is a similar approach which analyzes the growth of the error over iterative updates to determine when it is necessary to re-embed from scratch.
NetWalk~\citep{yu2018netwalk} is a variant of random walk-based methods like DeepWalk; along the same lines, it saves time by efficiently updating embeddings at each snapshot using warm starts and reservoir sampling.
CTDNE~\citep{nguyen2018continuous} is another random walk method which incorporates time information using the constraint that the walks must obey time. Notably, like our method, CTDNE is designed for networks with continuous time rather than just snapshots of different time intervals.

Among unsupervised deep approaches, DynGEM~\citep{goyal2018dyngem} is an autoencoder method which again uses warm starts for each snapshot, both for speed and to encourage embeddings to be approximately preserved across time steps; DynGraph2Vec~\citep{goyal2020dyngraph2vec} more directly allows information at different snapshots to be integrated by employing an RNN to evolve representations across time steps.
The similarly-named DyLink2Vec~\citep{rahman2018dylink2vec} is another deep autoencoder approach, which also integrates information across time steps; notably, like our method, it yields embeddings for links rather than nodes, though it works on snapshots rather than continuous time.
More recently, EvolveGCN~\citep{pareja2020evolvegcn} proposes the interesting idea of using an RNN to evolve not the node embeddings, but the weights of a GCN model which generates embeddings for each snapshot.
Some deep methods also handle continuous time: using point processes,  Know-Evolve~\citep{trivedi2017know}, HTNE~\citep{zuo2018embedding}, and DyRep~\citep{trivedi2019dyrep} model the occurrence of temporal edges, while Dynamic-Triad~\citep{zhou2018dynamic} aims to capture structural information by modeling the closure of wedges into triangles over time.
Most of these approaches are principally node embedding methods from which edge embeddings can be generated by aggregation; to our knowledge, ours is the first work which directly forms embeddings for continuous-time temporal edges without supervision and without aggregating node embeddings.

\section{Experiments} \label{sec:exp}

We evaluate our method on two kinds of tasks, edge classification and temporal link prediction, on a collection of five real-world temporal network datasets. The statistics for these networks are provided in Table~\ref{tab:datasets}, but we defer discussion of these datasets to Appendix~\ref{app:datasets}.

\begin{table}
\caption{\label{tab:datasets} Statistics of datasets used in our experiments.}
    \centering
    \begin{small}
        \begin{tabular}{l rrr}
        \toprule
        % \noalign{\hrule height 0.75pt}
        Dataset  & \# Nodes	 & \# Edges	 & Time Span (days)	\\
        % \noalign{\hrule height 0.25pt}
        \midrule
        {\sc bitcoinalpha }   & 3783	 & 24186	 & 1901	\\
        {\sc bitcoinotc }     & 5881	 & 35592	 & 1903	\\
        {\sc escorts }		& 10106	 & 50632	 & 2232	\\
        {\sc wikielect }		& 7118	 & 107071	 & 1378	\\
        {\sc epinions }		& 131828	 & 841372	 & 943	\\ 
        % \noalign{\hrule height 0.75pt}
        % \midrule
        % {\sc journeys }		& ?	 & ?	 & ?	\\ 
        \bottomrule
        \end{tabular}
    \end{small}
\end{table}

\subsection{Edge classification}

\paragraph{Experimental setup}
We use a logistic regression classifier to which the input is a feature vector for each edge of the network and the target is a binary edge class.
We make random 70\%/30\% splits of training/test data, and report test AUC of binary classification across 10 trials with 10 random splits. 
% \tong{If we choose the above evaluation settings (70\%, 10 trials, the logistic regression classifier, the feature vector, the AUC metric, etc.), by following some previous works, we may cite some previous papers.}

In addition to results for our TDLG method's sparse embeddings, we report results for several other methods.
We compare to results for some prior methods applied to these datasets: 1) CTDNE~\citep{nguyen2018continuous}; 2) TIMERS~\citep{zhang2018timers}; 3) EvolveGCN~\citep{pareja2020evolvegcn}; 4) DynGEM~\citep{goyal2018dyngem}; 5) GCRN~\citep{seo2018structured}; and 6) VGRNN~\citep{hajiramezanali2019variational}.
Note that these methods output $128$-dimensional dense embeddings, unlike our TDLG method, which outputs $m$-dimensional sparse embeddings; for an additional, direct comparison, we also report results for using the top $128$ eigenvectors (i.e., those with highest magnitude eigenvalues) of the TDLG adjacency matrix (`TDLG Dense').
% Finally, we propose a simple baseline (`Incid') of simply using the transposed incidence matrix $\mB^\top \in \{0,1\}^{m \times n}$, to which the column vector of standardized times is appended, as $(n+1)$-dimensional, sparse feature vectors.
With the exception of the methods we introduce and CTDNE, all other methods do not directly handle the continuous time stamps of the datasets; for these methods, we slot the edges into discrete snapshots by evenly dividing the full time span into 20 intervals.
All experiments are run on an Xeon Gold 6130 CPU and Tesla v100 16GB GPU; only the latter four methods (i.e., the deep methods) use the GPU, and we are unable to run these four methods on \textsc{Epinions} due to GPU memory limitations.
% We include code in the form of a Jupyter notebook~\citep{PER-GRA:2007} demo in the supplementary material.
% \tong{is the demo already included somewhere, e.g., appendix, link, attachment? May be more specific.}
% {\color{red} The number 20 roughly matches the order of magnitude chosen in other works, though typically this number varies by dataset.}
% Device is Tesla V100-PCIE-16GB.

\paragraph{Hyperparameter selection} For all embedding methods, we use the scikit-learn~\citep{scikit-learn} implementation of logistic regression; we increase the maximum iterations to $10^3$, and, since the edge classes are generally imbalanced across the datasets, we set the \texttt{class\_weight} option to `balanced,' which adjusts loss weights inversely in proportion to class frequency. We keep otherwise default options.
% For our TDLG method, we generally use $\sigma_t = 10^{-1}$, which we set by informal tuning; for the dense variant on \textsc{Escorts}, we find that the eigendecomposition does not converge quickly, so we instead use $\sigma_t = 10^0$, only for this combination of dataset and method. 
For our TDLG method, we set the time scale hyperparameter $\sigma_t$ as ratio of the standard deviation of the edges' times; calling this standard deviation be $\sigma_T$, we use $\sigma_t = 10^{-1} \cdot \sigma_T$, which is chosen by informal tuning. 
In Appendix~\ref{app:hyper}, we explore the impact of varying $\sigma_t$.%, as well as another way of varying our TDLG method.
We use the implementation of \citet{liu2020k} for TIMERS and all deep methods, all with default hyperparameters.
% \todo{Give CTDNE implementation? https://github.com/LogicJake/CTDNE.}
For the \textsc{Epinions} dataset only, for our TDLG method, we modify the solver for sparse logistic regression to the `saga' solver, which is more scalable than the default `lbfgs' solver, and reduce the maximum number of iterations to 100 (the default value).

\paragraph{Results}
We first plot mean AUC and 95\% confidence intervals for selected methods - our TDLG, CTDNE, TIMERS, EvolveGCN, and DynGEM - on all datasets in Figure~\ref{fig:edgeclass_compare}. Our TDLG method achieves higher performance than the comparison methods on all datasets, often by a large margin.

\begin{figure}[h!]
\centering
\includegraphics[width=0.99\textwidth]{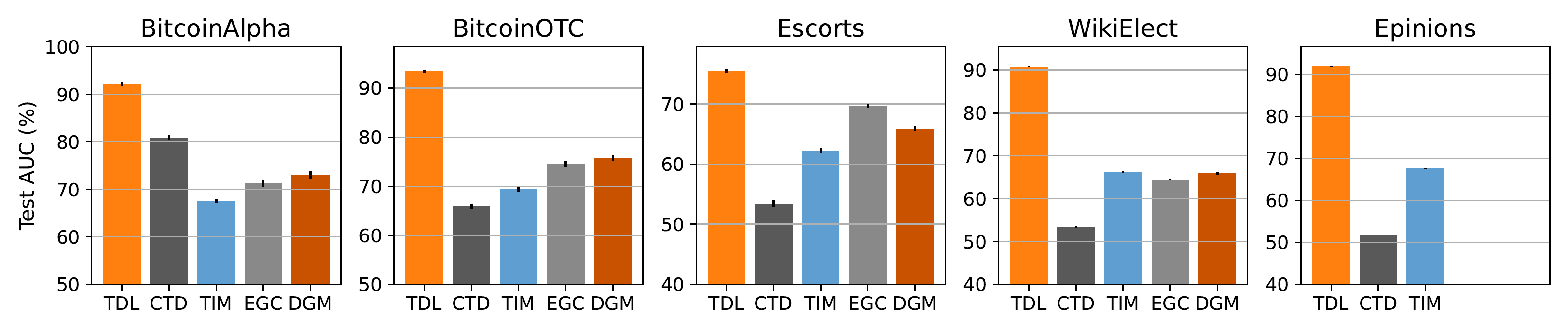}
\caption{Test AUC on the edge classification task for real-world datasets using our TDLG method, CTDNE, TIMERS, EvolveGCN, and DynGEM. 
% Note that the latter two methods (i.e., the deep methods) on \textsc{Epinions} exceed our machine's memory limitations.
}
\label{fig:edgeclass_compare}
\end{figure}

To compactly compare all 9 of the methods across these datasets, we aggregate the performance across the datasets as follows: we plot the mean across all datasets, excluding \textsc{Epinions}, of each method's test AUC. We exclude \textsc{Epinions} since we are unable to run some methods on this dataset. See Figure~\ref{fig:all_means} (left).
TDLG achieves the highest performance out of all these methods on all datasets; the dense variant of TDLG, which perhaps provides more direct comparison with other methods, sees reduced performance, but also outperforms all prior methods.
Tables of full experimental results are deferred to Appendix~\ref{app:full_results}.

% Full results are given in Table ? in Appendix ? \todo{full table of results}.

% \begin{figure}[h!]
% \centering
% \includegraphics[width=0.99\textwidth]{}
% \caption{Test performance on the edge classification task as a proportion of the best method's F1 score: mean and minimum performance over the evaluated real-world datasets, excluding \textsc{Epinion}. The 3 methods we introduce are on the left, along with 6 prior methods for comparison on the right.}
% \label{fig:edgeclass_meanmin}
% \end{figure}

% \begin{figure}[h!]
% \centering
% \includegraphics[width=0.99\textwidth]{}
% \caption{Test F1 score on the edge classification task, aggregated across datasets: mean and minimum performance over the evaluated real-world datasets, excluding \textsc{Epinions}. The 3 methods we introduce are on the left, along with 6 prior methods for comparison on the right.}
% \label{fig:edgeclass_meanmin}
% \end{figure}

\begin{figure}[h!]
\centering
\includegraphics[width=0.99\textwidth]{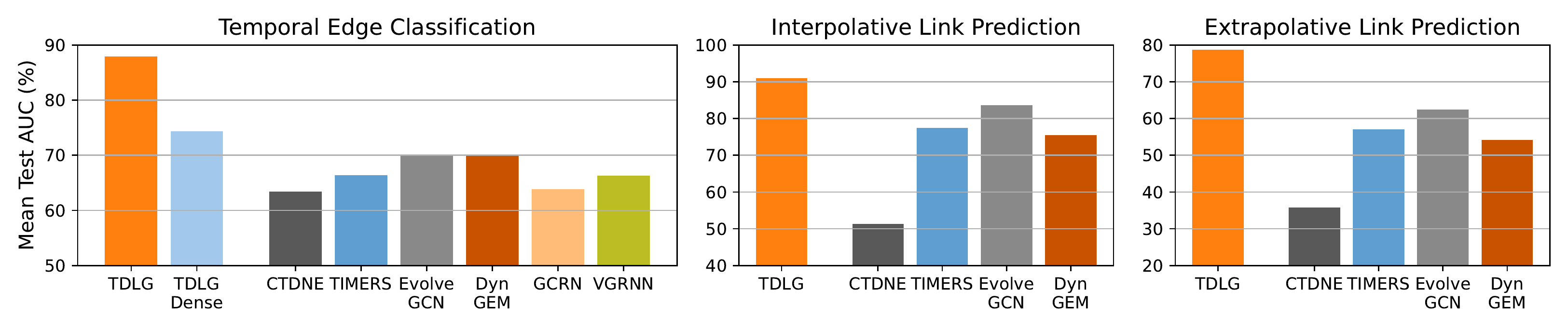}
\caption{Test AUC on all three tasks, aggregated across datasets: mean performance over the evaluated real-world datasets, excluding \textsc{Epinions}. 
% For clarity, the TDLG methods we introduce are separated from prior methods by a small gap.
}
\label{fig:all_means}
\end{figure}

\paragraph{Runtime} We evaluate the time efficiency of our approach against the selected baseline methods. For this, we report the mean runtime in seconds for one trial of edge classification. Runtime constitutes one trial of learning embeddings, learning a logistic regression classifier on training data, and predicting on test data. See Figure~\ref{fig:edgeclass_times}. Across all five datasets, our TDLG method is the fastest, often by a significant margin.

\begin{figure}[h!]
\centering
\includegraphics[width=0.99\textwidth]{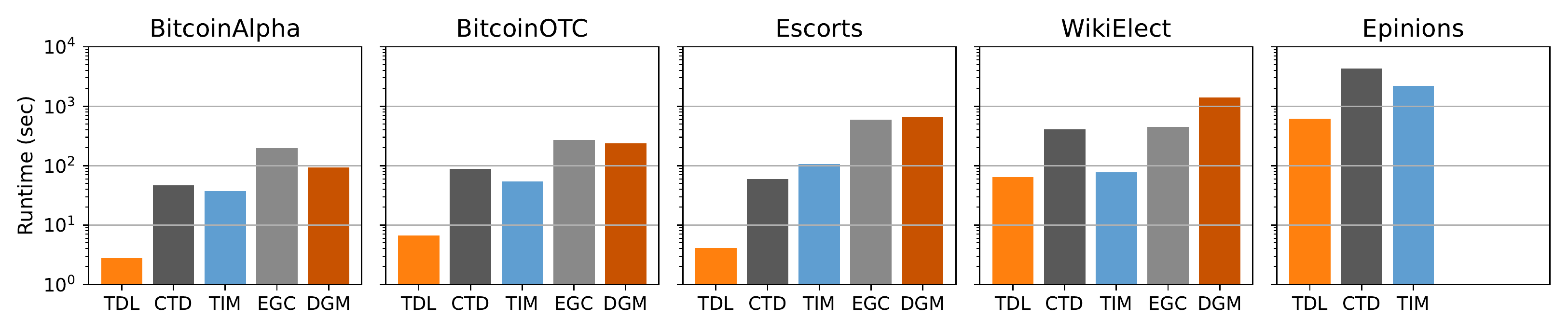}
\caption{Runtime (seconds) on the edge classification task.}
\label{fig:edgeclass_times}
\end{figure}

\subsection{Link Prediction}\label{sec:linkpred}

\paragraph{Experimental setup}
Given the dataset of actual, `positive' edge tuples $(u_i,v_i,t_i)$, we generate an equal-sized set of fake, `negative' edges by independently shuffling the three columns (i.e., independently shuffling the set of first nodes, the set of second nodes, and the times). Note that this preserves the distribution nodes/times in of each of the columns. The task is to distinguish positive edges from negative ones.
We evaluate two different settings for link prediction. In the first, which we call temporal link outlier detection, the test edges are in the same time interval as the training edges. In the second, which we call temporal link prediction, the test edges are in an interval of time following that of the training data. We also call these settings `interpolative' and `extrapolative,' respectively. Prior work generally focuses on the latter setting, since it is more applicable, e.g., for predicting future events/behavior, but we also evaluate the former mainly out of scientific interest.

Specifically, given that we are splitting the data into 20 intervals, we consider intervals 1-19 to be `early' data and interval 20 to be `late' data. To share computational effort across the two settings, we train both classifiers on 70\% of edges (both positive and negative) from the early intervals. For the interpolative setting, the test data comprises the remaining 30\% of edges from the early intervals. For the extrapolative setting, the test data comprises all edges from the late interval.
Evaluating the link prediction task has greater computational cost compared to edge classification since the input graph changes across trials rather than just train/test splits of labels, requiring new embeddings to be made for each trial. For this reason, we only test the 5 selected methods from the previous section. We report mean AUC and 95\% confidence intervals across 5 trials, each trial having different random shuffles producing different negative edges, as well as different 70\%/30\% train/test splits. 
% \tong{If we choose the above evaluation settings (70\%, 20 intervals, the AUC metric, etc.), by following some previous works, we may cite some previous papers.}

\paragraph{Hyperparameter selection} We generally use the same hyperparameters as for edge classification, with one exception: since there is an equal number of positive and negative edges, we no longer set the \texttt{class\_weight} option to `balanced,' and all edges are given equal weight. 
% Note that we are still unable to run deep methods on \textsc{Epinions}.

\paragraph{Results} 
We plot mean AUC and 95\% confidence intervals for all datasets, for both the interpolative and extrapolative settings of link prediction, in Figure~\ref{fig:edgeclass_compare}.
As for edge classification, we also plot the aggregated (mean) performance across datasets in Figure~\ref{fig:all_means} (right).
In both settings, across all datasets, our TDLG method achieves the highest performance, often by a large margin. 
We do find that the deep methods EvolveGCN and DynGEM generally perform better than CTDNE and TIMERS, and EvolveGCN in particular is competitive with TDLG on some datasets.
% Interestingly, in the extrapolative setting, some methods may fall below 50\% AUC (notably, CTDNE on several datasets); this may be because the distribution of edges differs significantly between the early and late time intervals in a way that the method is unable to adapt to.

\begin{figure}[h!]
\centering
\includegraphics[width=0.99\textwidth]{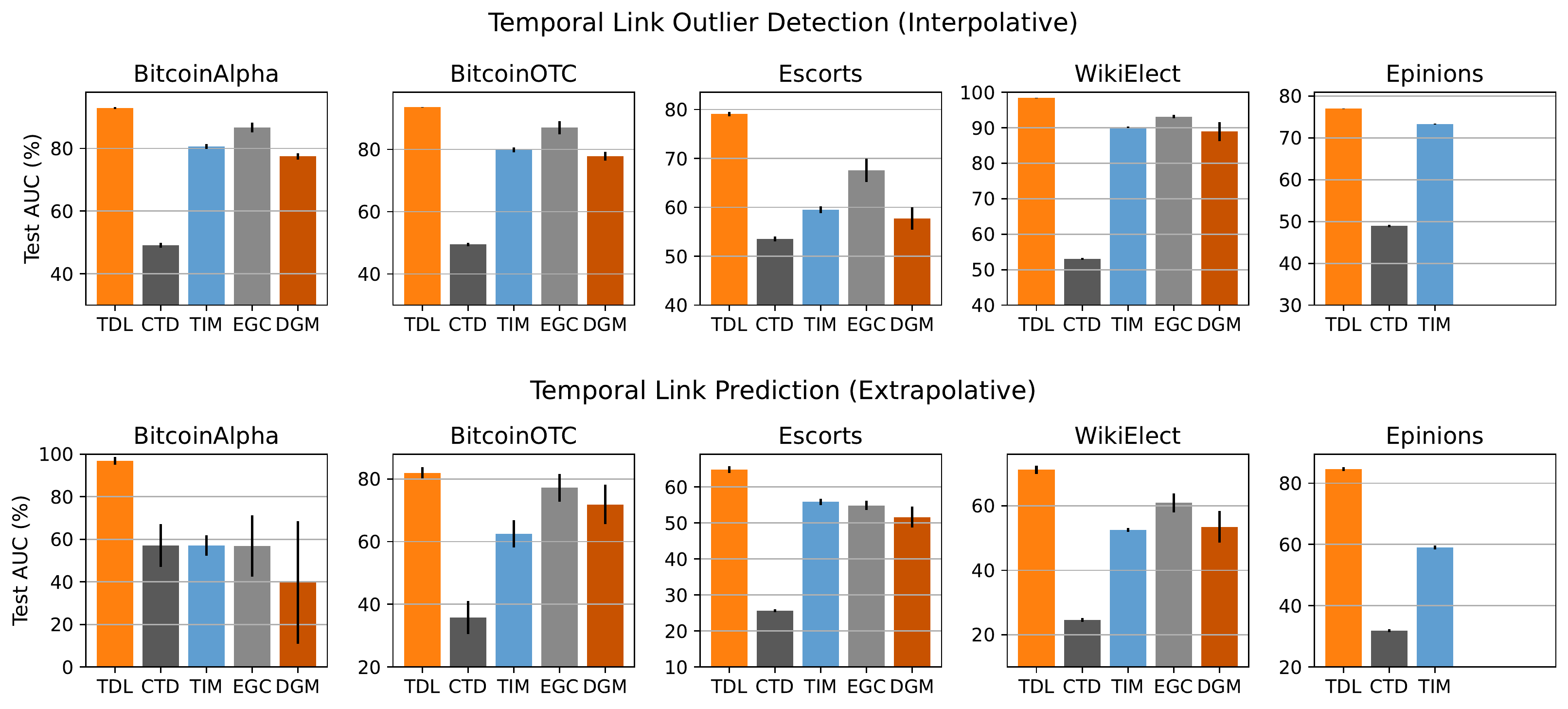}
\caption{Test AUC on the interpolative (top) and extrapolative (bottom) link prediction tasks using our TDLG method, CTDNE, TIMERS, EvolveGCN, and DynGEM.}
\label{fig:linkpred_unified}
\end{figure}

\section{Conclusion}

We present a novel framework for temporal edge embedding called the time-decayed line graph (TDLG). Unlike prior methods, our method works directly with continuous timestamps rather than requiring discretization of times, and directly generates temporal edge embeddings rather than requiring aggregation of node embeddings. Also in contrast to prior methods, ours has a very simple concept and high ease of implementation, since it is essentially based on creating a sparse matrix of proximities between temporal edges. We show that, despite its simplicity, our method achieves superior performance on several downstream tasks on benchmark temporal network datasets, while requiring less runtime. Its simplicity also facilitates theoretical demonstrations of its effectiveness on our proposed TSBM family of graphs.

An interesting direction for future work is further increasing scalability via reduction of the number of features. The TDLG method returns $m$ features per temporal edge based on proximities to all other temporal edges. Perhaps it is possible to efficiently reduce the feature space by only considering proximities to certain edges, e.g., edges with higher effective resistances. Along the same lines, it may also be interesting to explore ways of reducing the feature space by efficiently forming a low-rank representation of the TDLG adjacency matrix, for example, via a stochastic method. 
% More on the theoretical side, we empirically find that `direct' edge embeddings seem more effective than edge embeddings made by aggregating node embeddings

% \bibliography{main}
\bibliographystyle{iclr2021_conference}
\bibliography{iclr2021_conference}

\clearpage
\appendix
\section{Appendix}
% You may include other additional sections here.

\subsection{Proof of Proposition~\ref{prop:tsbm_recovery}}\label{app:proofs}

We restate Proposition~\ref{prop:tsbm_recovery} from Section~\ref{sec:theory} and provide the deferred proof. 
Note that we make a slight change to the notation to facilitate the proof.
As in Section~\ref{sec:theory}, we assume the limit as number of nodes $n\to \infty$ for simplicity of the constant values, though it is not crucial to the analysis.
\begin{prop}[TDLG Embedding Recovers Communities from Temporal SBMs]\label{thm:tsbm_recovery}
Suppose \mbox{$\mA \in \RR^{m \times m}$} is the expected TDLG adjacency matrix of a Temporal SBM graph with node communities $\{U,V\}$, time periods $\{1,2\}$, and zero time variance ($\sigma_{1}^2=\sigma_{2}^2=0$). Let $\tilde{\mX} \in \RR^{n \times m}$ be the mean-edge node embeddings resulting from treating the rows of $\mA$ as edge embeddings. 
Assuming that $\gamma \neq 1$ and it is not the case that $\alpha_1 = \alpha_2$ = \sfrac{1}{2}, the node communities are distinguishable in $\mX$.
% The node communities $\{U,V\}$ are linearly separated in $\mX$.
\end{prop}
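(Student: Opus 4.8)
The plan is to compute the expected mean-edge node-embedding matrix $\mX=\tilde{\mB}\mA$ in closed form by block arithmetic and then exhibit a single affine functional that separates the two communities. First I would invoke Proposition~\ref{prop:tsbm_adj} for the $6\times 6$ constant-block form of $\mA$: the block joining edge set $X$ to edge set $Y$ equals the expected number of endpoints they share ($\sfrac{8}{n}$ for $U\times U$ with $U\times U$ and for $V\times V$ with $V\times V$, $\sfrac{4}{n}$ whenever exactly one of $X,Y$ is a $U\times V$ set, and $0$ for $U\times U$ with $V\times V$) times $1$ if $X$ and $Y$ lie in the same time period and $\gamma$ otherwise.

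Next I would derive the block structure of the expected incidence matrix $\mB$ by the same counting used in Proposition~\ref{prop:tsbm_adj}: a uniformly random edge in a $U\times U$ set has each of its two endpoints equal to a fixed node $w\in U$ with probability $\sfrac{2}{n}$, so in the $n\to\infty$ limit $\E[\mB_{w,e}]=\sfrac{4}{n}$; for a $U\times V$ set it is $\sfrac{2}{n}$; and for a $V\times V$ set with $w\in U$ it is $0$. Summing over the six edge sets weighted by their sizes from Definition~\ref{def:tsbm}, every row of $\mB$ sums to $2\Delta$, so row-normalization merely rescales and yields a clean $2\times 6$ block form for $\tilde{\mB}$ with entries $\sfrac{2}{(\Delta n)}$, $\sfrac{1}{(\Delta n)}$, and $0$.

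The core step is the block product $\mX=\tilde{\mB}\mA$, in which each intermediate term must be weighted by the size of the corresponding edge-set block; the factors of $\Delta$ cancel and one is left with a $2\times 6$ constant-block matrix. Writing out the row assigned to a node of $U$ and the row assigned to a node of $V$ (pulling out a common factor $\sfrac{2}{n}$), the two rows agree on the $U\times V$ blocks (each block equal to $1+\gamma$) but differ on the intra-community blocks, the gap being proportional to $\alpha_1+\gamma\alpha_2$ on the time-$1$ blocks and to $\alpha_2+\gamma\alpha_1$ on the time-$2$ blocks. Since every node of $U$ receives one of these two rows and every node of $V$ the other, and the two rows are unequal, the hyperplane bisecting them separates the communities, which is the sense in which they are distinguishable in $\mX$. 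Inspecting the same block structure at $\gamma=1$ shows that the only distinguishing quantity that survives is $\alpha_1+\alpha_2$, so TSBMs with $\alpha_1+\alpha_2=1$ (such as the one in Figure~\ref{fig:synth_data}) become indistinguishable once time decay is switched off, even though they possess community structure; this is why the hypotheses $\gamma\neq 1$ and ``not $\alpha_1=\alpha_2=\sfrac{1}{2}$'' are the natural ones to impose.

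The main obstacle is twofold. Concretely, the bookkeeping in the block product — keeping track of the six edge-set sizes and matching each shared-endpoint count to the correct $1$-or-$\gamma$ time factor — is the error-prone part, and it is where this argument is genuinely more involved than that of Proposition~\ref{prop:tsbm_adj}. Conceptually, the delicate point is fixing the precise meaning of ``distinguishable'': in the strict expected-matrix, $n\to\infty$ regime the two community rows come out distinct rather generically, so the stated hypotheses earn their keep mainly either in the sampled, finite-$n$ setting — where one would need a concentration argument in the spirit of \citet{spielman2012spectral}, pitting the community-mean gap $\propto\alpha_1+\gamma\alpha_2$ against sampling fluctuations — or in the contrast against the decay-free case $\gamma=1$.
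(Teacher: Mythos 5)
Your high-level plan (compute the node-embedding blocks and show the $U$-row and $V$-row differ) matches the paper's, but the way you compute the blocks has a genuine flaw: you replace the expectation of the product $\tilde{\mB}\mA$ by the product of the expectations, multiplying the block-expectation of the row-normalized incidence matrix against the block-expectation of the TDLG matrix. The incidence vector $\vb_u$ and the rows of $\mA$ are built from the \emph{same} random edge set and are strongly correlated (conditioning on an edge being incident to $u$ changes how many endpoints it can share with a fixed other edge), so this factorization does not compute the quantity the proposition is about. The paper's proof instead evaluates the expected embedding entrywise by conditional path counting: for a column indexed by an edge $(w,z)$ not touching $u$, the entry of $\vb_u\mA$ is the decay-weighted number of length-2 paths $u\!-\!w\!-\!z$, and the proof deliberately restricts to the blocks over $U'=U\setminus\{u\}$, $V'=V\setminus\{v\}$ so that only such columns are considered. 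This yields per-entry block values (in units of $\sfrac{2}{n}$) of $\alpha_1+\gamma\alpha_2$, $(1-\alpha_1)+\gamma(1-\alpha_2)$, and $\tfrac{1}{2}(1+\gamma)$, so the $U$-versus-$V$ gap on the period-1 intra-community blocks is $(2\alpha_1-1)+\gamma(2\alpha_2-1)$ (and its $\alpha_1\leftrightarrow\alpha_2$ swap on period 2); these vanish simultaneously exactly when $\gamma=1$ with $\alpha_1+\alpha_2=1$, or $\alpha_1=\alpha_2=\sfrac{1}{2}$, which is precisely why the stated hypotheses are needed.

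Your shortcut gives different numbers --- $(1+\alpha_1)+\gamma(1+\alpha_2)$ on the own-community block rather than $\alpha_1+\gamma\alpha_2$ (this is, in fact, a draft computation left commented out in the paper's source and abandoned in favor of the appendix argument) --- and with them the community gap is proportional to $\alpha_1+\gamma\alpha_2$, which is strictly positive for every admissible $\alpha_1,\alpha_2,\gamma$. That makes the hypotheses vacuous and would ``certify'' community recovery even at $\alpha_1=\alpha_2=\sfrac{1}{2}$, where both time periods are Erd\H{o}s--R\'enyi and there is no community structure to recover; the artifact arises because the expected incidence matrix, organized into ground-truth-labeled blocks, already encodes each node's label. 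It also contradicts your own closing claim: with your block values, at $\gamma=1$ the gap is proportional to $\alpha_1+\alpha_2$, which does not vanish at $\alpha_1+\alpha_2=1$, so the degeneracy you cite (and which the paper's values do exhibit there) does not follow from your computation. You sensed this when you observed that the two community rows ``come out distinct rather generically''; the resolution is not a concentration argument but redoing the expectation jointly, as the paper does via 2-path counts, rather than factoring it.
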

\begin{proof}
Recalling Definition~\ref{def:mene} for mean-edge node embeddings, the node embeddings $\tilde{\mX} \in \RR^{n\times m}$ are given by the matrix product $\tilde{\mX} = \tilde{\mB} \mA$, where $\tilde{\mB}$ results from dividing each row of the incidence matrix $\mB \in \R^{n \times m}$ by its sum. Since the expected graph $\mA$ is $(2 \Delta)$-regular by Definition~\ref{def:tsbm} \Cam{Earlier after introducing the definition we said that $\Delta$, rather than $2\Delta$ was the expected degree of each node. I think $2\Delta$ is correct so should fix earlier claim.}, we have $\tilde{\mX} = \left( \sfrac{1}{(2\Delta)} \right) \mB \mA$. For simplicity, for most of this proof, we will skip this division and take $\mX = \mB \mA$; note that $\mX$ are `sum-edge node embeddings,' which result from summing each the edge embeddings of each node's incident edge rather than taking the mean. \Cam{I think you can do this all without doing the formula $\tilde{\mX} = \left( \sfrac{1}{(2\Delta)} \right) \mB \mA$. Just say that we will consider $\bv{BA}$ which is just an approximate a scaling of $\bv{\tilde B} \bv A$ since the graph is approximately regular.}
% , i.e., we let $\mX$ be `sum-edge node embeddings,' since distinguishability in either is equivalent.

We will consider the sum-edge node embeddings of two nodes, $u \in U$ and $v \in V$. Let $U' = U \setminus \{u\}$ and $V' = V \setminus \{v\}$. We calculate entries of the embeddings corresponding to $U' \times U'$, $V' \times V'$, and $U' \times V'$ for the two time periods, and show that the two nodes' embeddings are distinct under the assumptions on $\alpha_1,\alpha_2$ and $\gamma$.

We start with the expectation of node $u$'s embedding $\vx_u$, which is a single row of $\mX$ given by $\vb_u \mA$, where $\vb_u$ is the incidence vector of node $u$. Hence, $\vx_u$ is a sum of rows of $\mA$ corresponding to edges which are incident on node $u$.
\Cam{I think $(u,v)$ is better and more standard notation than $u-v$ for an edge. For a path you can use $(u,v,w)$} \sud{This notation makes it easy to discuss paths with edges in different time periods.}
Consider the row vector of $\mA$ corresponding to such an edge $u-w$ between distinct nodes $u$ and $w$: this vector has a nonzero entry exactly over columns of $\mA$ corresponding to edges incident to this edge $u-w$; in particular, by the assumption of zero time variance, the nonzero entries are $1$ if the incident edges are in the same time period, otherwise $\gamma$. These incident edges are of the form $w-z$ or $u-z$ for some node $z$. Since we ignore entries of the embedding vectors corresponding to columns of the latter type (due to only considering columns involving edges between $U',V'$ rather than $U,V$), the expectation of entries of $\vx_u$ can be calculated by counting the contributions of paths of length 2 of the form $u-w-z$ with $u \neq w,z$.

\Cam{This actually confused me. If we look at a column corresponding to an edge in $U' \times U'$ then it is of the form $(w,v)$ for $w,z \in U'$. But it could still be incident to edges that touch $u$. And in fact, it is -- these are the quantities we count. I think you need to say something like: focus on a particular entry of the embedding vector corresponding to an edge $(w,v) \in U' \times U'$. This entry is given by $b_u A_{w,v}$ where $A_{w,v}$ is the appropriate column, and so is the sum of edges incident to $u$ which are incident to $(w,v)$. Then you can compute its expectation right?}
\sud{There is probably a better way to phrase what I am saying, but I don't see a contradiction between what you wrote and my preceding paragraph. What I am saying is that by ignoring columns involving $u$, we don't need to consider paths like $u_i-u-u_j$, i.e., clique paths.}

\makeatletter
\newcommand{\dashone}{\mathbin{\text{\@dashone}}}
\newcommand{\@dashone}{%
  \ooalign{\hidewidth\raise1ex\hbox{\tiny{1}}\hidewidth\cr$\m@th-$\cr}%
}
\newcommand{\dashtwo}{\mathbin{\text{\@dashtwo}}}
\newcommand{\@dashtwo}{%
  \ooalign{\hidewidth\raise1ex\hbox{\tiny{2}}\hidewidth\cr$\m@th-$\cr}%
}

First, consider the $(U' \times U')_1$ block of $\vx_u$. Contributions to this block must come from paths of the form $u \dashone u_i \dashone u_j$ or $u \dashtwo u_i \dashone u_j$ with $u_i,u_j \in U'$, where the number above the dash denotes the time period of the edge.
% , that is, $w-z$ must be in time period 1, whereas $u-w$ can be in either time period 1 or 2.
For the former possibility of $u \dashone u_i \dashone u_j$, there are $\alpha_1 \Delta$ such edges $u \dashone u_i$ (i.e., intra-community edges from $u$ in period 1), and for each node $u_i \in U'$, there are also $\alpha_1 \Delta$ edges of the form $u_i \dashone u_j$ to some node $u_j \in U'$. Hence the total number of $u \dashone u_i \dashone u_j$ paths is the product $\alpha_1^2 \Delta^2$, and since both edges in such paths are in period 1, there is no time decay.
For the latter possibility of $u \dashtwo u_i \dashone u_j$, there are $\alpha_2 \Delta$ edges $u \dashtwo u_i$ (intra-community edges from $u$ in period 2), and through each of these edges, there are $\alpha_1 \Delta$ such paths $u \dashtwo u_i \dashone u_j$; hence the total number of such paths is $\alpha_1 \alpha_2 \Delta^2$, and due to time decay, their total contribution is $\gamma \alpha_1 \alpha_2 \Delta^2$.
Therefore, the sum of the $(U' \times U')_1$ block of $\vx_u$ is $\left( \alpha_1^2 + \gamma \alpha_1 \alpha_2 \right) \Delta^2$. It follows that the sum of the $(U' \times U')_1$ block of $\tilde{\vx}_u$ is $\left( \sfrac{1}{(2\Delta)} \right) \left( \alpha_1^2 + \gamma \alpha_1 \alpha_2 \right) \Delta^2 = \left( \alpha_1^2 + \gamma \alpha_1 \alpha_2 \right) \tfrac{\Delta}{2} $. Since there are $\alpha_1 \tfrac{\Delta n}{4}$ edges in $(U' \times U')_1$, the expected value of an entry in this block is $\left( \alpha_1 + \gamma \alpha_2 \right)( \sfrac{2}{n} )$.

Second, consider the $(V' \times V')_1$ block of $\vx_u$. Contributions to this block must come from paths of the form $u \dashone v_i \dashone v_j$ or $u \dashtwo v_i \dashone v_j$ with $v_i,v_j \in V'$.
For the former possibility of $u \dashone v_i \dashone v_j$, there are $(1-\alpha_1) \Delta$ such edges $u \dashone v_i$ (i.e., inter-community edges from $u$ in period 1), and for each node $v_i \in V'$, there are $\alpha_1 \Delta$ edges of the form $v_i \dashone v_j$ to some node $v_j \in V'$ (intra-community edges from $v_i$ in period 1). Hence the total number of $u \dashone v_i \dashone v_j$ paths is the product $\alpha_1 (1-\alpha_1) \Delta^2$, and since both edges in such paths are in period 1, there is no time decay.
For the latter possibility of $u \dashtwo v_i \dashone v_j$, there are $(1-\alpha_2) \Delta$ edges $u \dashtwo v_i$ (inter-community edges from $u$ in period 2), and through each of these edges, there are $\alpha_1 \Delta$ such paths $u \dashtwo v_i \dashone v_j$; hence the total number of such paths is $\alpha_1 (1-\alpha_2) \Delta^2$, and due to time decay, their total contribution is $\gamma \alpha_1 (1-\alpha_2) \Delta^2$.
Therefore, the sum of the $(V' \times V')_1$ block of $\vx_u$ is $\left( \alpha_1 (1-\alpha_1) + \gamma \alpha_1 (1-\alpha_2) \right) \Delta^2$. As before, it follows that the expected value of an entry in the $(V' \times V')_1$ block of $\tilde{\vx}_u$, which also contains $\alpha_1 \tfrac{\Delta n}{4}$ edges/entries, is $\left( (1-\alpha_1) + \gamma (1-\alpha_2) \right) ( \sfrac{2}{n} )$.

Third, consider the $(U' \times V')_1$ block of $\vx_u$. Contributions to this block must come from paths of the form $u \dashone u_i \dashone v_i$, $u \dashone v_i \dashone u_i$, $u \dashtwo u_i \dashone v_i$, or $u \dashtwo v_i \dashone u_i$, with $u_i \in U'$ and $v_i \in V'$. As before, we sum the contributions each kind of path, yielding
\begin{equation*}
    \left( \alpha_1 (1-\alpha_1) + (1-\alpha_1)^2 + \gamma \alpha_2 (1-\alpha_1) + \gamma (1-\alpha_2) (1-\alpha_1) \right) \Delta^2
    = (1+\gamma)(1-\alpha_1)\Delta^2
\end{equation*}
for the sum of the $(U' \times V')_1$ block of $\vx_u$. It follows that the expected value of an entry in the $(U' \times V')_1$ block of $\tilde{\vx}_u$, which contains $(1-\alpha_1) \tfrac{\Delta n}{2}$ edges/entries, is $\left( 1 + \gamma \right) ( \sfrac{1}{n} )$.

Note that the expected values of entries in the $(U' \times U')_2$, $(V' \times V')_2$, and $(U' \times V')_2$ follow from these results by symmetry, by simply swapping $\alpha_1$ and $\alpha_2$. The embedding $\tilde{\vx}_v$ for the node in $V$ also follows by symmetry.
We can finally write the block structure of the mean-edge node embeddings $\tilde{\vx}_u$ and $\tilde{\vx}_v$ for these nodes. As with the expected adjacency matrix Proposition~\ref{prop:tsbm_adj}, the blocks are constant in value, and these values are given by $\sfrac{2}{n}$ times
%
% \NiceMatrixOptions{letter-for-dotted-lines=V}
\NiceMatrixOptions{code-for-first-row = \scriptstyle,code-for-first-col = \scriptstyle }
% \NiceMatrixOptions{columns-width=25pt}
\[
\begin{pNiceArray}{c|c|c|c|c|c}[first-col,first-row,nullify-dots]
& (U' \times U')_1 & (U' \times U')_2 & (V' \times V')_1 & (V' \times V')_2 & (U' \times V')_1 & (U' \times V')_2\\ 
u & \begin{matrix}\phantom{+\gamma}\alpha_1\\+\gamma \alpha_2\end{matrix} & \begin{matrix}\phantom{+\gamma}\alpha_2\\+\gamma \alpha_1\end{matrix} & 
\begin{matrix}\phantom{+\gamma}(1-\alpha_1)\\+\gamma (1-\alpha_2)\end{matrix} & 
\begin{matrix}\phantom{+\gamma}(1-\alpha_2)\\+\gamma (1-\alpha_1)\end{matrix} &\begin{matrix}\tfrac{1}{2}(1+\gamma) \end{matrix} &\begin{matrix}\tfrac{1}{2}(1+\gamma) \end{matrix} \\\hline
v & \begin{matrix}\phantom{+\gamma}(1-\alpha_1)\\+\gamma (1-\alpha_2)\end{matrix} & 
\begin{matrix}\phantom{+\gamma}(1-\alpha_2)\\+\gamma (1-\alpha_1)\end{matrix} & \begin{matrix}\phantom{+\gamma}\alpha_1\\+\gamma \alpha_2\end{matrix} & \begin{matrix}\phantom{+\gamma}\alpha_2\\+\gamma \alpha_1\end{matrix} &\begin{matrix}\tfrac{1}{2}(1+\gamma) \end{matrix} &\begin{matrix}\tfrac{1}{2}(1+\gamma) \end{matrix}
\end{pNiceArray}.
\]
As desired, the embeddings for nodes $u$ and $v$ are distinct as long as both $\gamma \neq 1$ and it is not the case that $\alpha_1 = \alpha_2$ = \sfrac{1}{2}. Note that, as discussed in Section~\ref{sec:theory}, the embeddings are not distinct if $\gamma=1$ and $\alpha_1+\alpha_2=1$.
\end{proof}

% \[ \sum_{(u,w) \in (\{u\} \times U')_1 \cap E} \sum_{(w,z) \in (U' \times U')_1} \cap E\] % Alternate way of writing

\clearpage
\subsection{Datasets}\label{app:datasets}
We use a collection of 5 temporal network datasets which are hosted on the repositories of \citet{nr} and \citet{leskovec2016snap}. The statistics for these networks are provided in Table~\ref{tab:datasets}.
\textsc{bitcoinalpha} and \textsc{bitcoinotc}~\citep{kumar2016edge,kumar2018rev2} are who-trust-whom networks of people who trade Bitcoin on the Bitcoin Alpha and Bitcoin OTC platforms. Each temporal edge involves one user rating another user's trustworthiness on a scale of -10 to +10 at some point in time; we binarize these ratings to positive and nonpositive to produce binary edge labels for classification. \textsc{escorts} is a bipartite network of sexual escorts and buyers; the edges are timed instances of a buyer rating an escorts from -1 to +1, which we again binarize as before. \textsc{wikielect}~\citep{leskovec2010signed} is a network of Wikipedia users voting in administrator elections for or against other users to be promoted to administrator. Finally, \textsc{epinions}~\citep{leskovec2010signed} is a who-trusts-whom network for the now-closed consumer review website Epinions.com.

\subsection{Hyperparameter Sensitivity}\label{app:hyper}

\paragraph{Time scale hyperparameter} As described in the main body of this paper, for our TDLG method, we set the hyperparameter $\sigma_t$ as ratio of the standard deviation of the edges' times. Letting this standard deviation be $\sigma_T$, we use $\sigma_t = \sigma_t' \cdot \sigma_T$. For all results in the main body, we use $\sigma_t' = 10^{-1}$, which we choose by informal tuning. Figure~\ref{fig:ablation} shows how test performance is impacted when varying $\sigma_t' \in [10^{-3},10^{+1}]$. Across the tested datasets (\textsc{BitcoinAlpha}, \textsc{BitcoinOTC}, \textsc{Escorts}, and \textsc{WikiElect}), we find that the highest test performance indeed occurs near $\sigma_t' = 10^{-1}$, though there is some variance in the optimal value across datasets. Performance also seems fairly robust with respect to this hyperparameter: for all tested datasets except \textsc{Escorts}, performance does not fall more than 5\% relative to the best value of $\sigma_t'$ as it is varied across the tested range.

\begin{figure}[h!]
\centering
\includegraphics[width=0.99\textwidth]{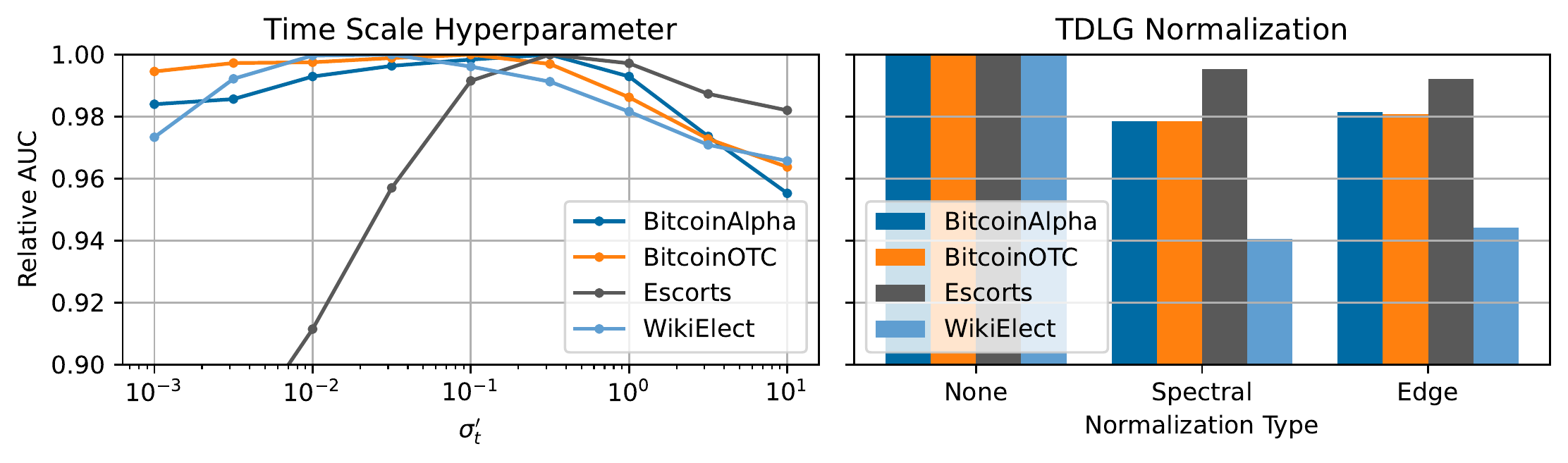}
\caption{Edge classification performance when varying the time scale hyperparameter $\sigma_t$ (left) or normalization type (right). Mean test AUC across 5 trials as a proportion of the mean test AUC achieved by the best \mbox{$\sigma_t'$ value} or normalization type.}
\label{fig:ablation}
\end{figure}

\paragraph{TDLG normalization} It is common to somehow normalize the adjacency matrix $\mA$ prior to use in downstream tasks. We forego this procedure for all results in the main body of this paper, but here we test the impact of two kinds of normalization on the edge classification test.
Let $\mD \in \RR_+^{m \times m}$ denote the diagonal degree matrix, the diagonal entries of which are the row sums of $\mA$ (which are equivalent to the column sums). A common form of normalization, used in the spectral clustering algorithm of \citet{shi2000normalized} and many other works, is to set the normalized matrix 
\[ \tilde{\mA} = \mD^{-1/2} \mA \mD^{-1/2};\] this normalization has useful spectral properties, bounding eigenvalues within $[-1,+1]$. 

We also test a second normalization procedure which aims to roughly preserve the total amount of edge weight associated with each edge. Since we assume that the original temporal graph is unweighted, meaning each edge weight is $1$, we seek for each row/column sum of the TDLG adjacency matrix to also be approximately $1$.
In particular, we set
\begin{equation*} \label{eqn:tdlg_norm}
    \tilde{\mA} = \tfrac{1}{2} \left( \mD^{-1} \mA + \mA \mD^{-1} \right),
\end{equation*}
that is, we return the mean of the two matrices that result from dividing the rows/columns of $\mA$ by the row/column sums.
Note that this normalization preserves symmetry of the adjacency matrix. We observe that this second normalization scheme guarantees rough preservation of edge weight as follows. Because each row of $\mD^{-1} \mA$ sums to $1$, each row of $\tilde{\mA}$ sums to at least $\sfrac{1}{2}$. Further, since there are $m$ rows, the total sum of all entries in $\mD^{-1} \mA$ is $m$; similarly, $\mA \mD^{-1}$ sums to $m$ as well, so $\tilde{\mA}$ sums to $m$. Therefore, each edge, which in the original temporal graph was given exactly fraction $\sfrac{1}{m}$ of the total edge weight, is assigned at least fraction $\sfrac{1}{2m}$ of the total weight in the normalized TDLG.

As shown in Figure~\ref{fig:ablation} (right), across the tested datasets, we find that both normalization schemes ('Spectral' and 'Edge,' respectively) generally have a negative impact on performance, though the drop in performance is never more than around 6\%. These results suggest that the simplest approach of skipping normalization is generally also the most performant.

% \clearpage
\subsection{Full Experimental Results}\label{app:full_results}

% As stated in Section~\ref{sec:linkpred}, we include plots for link prediction performance for each dataset, for both the interpolative and extrapolative settings, in Figure~\ref{fig:linkpred_unified}.
%
Tables \ref{tab:full_res_edgeclass}, \ref{tab:full_res_intlinkpred}, and \ref{tab:full_res_extlinkpred} report the full performance results for edge classification, temporal link outlier detection (interpolative), and temporal link prediction (extrapolative), respectively, which were deferred in Section~\ref{sec:exp}.

\begin{table}[!ht]
\caption{\label{tab:full_res_edgeclass} Complete results for edge classification: mean test AUC (\%), as well as 95\% confidence intervals, on the benchmark of datasets from Table~\ref{tab:datasets}. Methods are separated by non-deep vs deep.}
    \centering
    \begin{small}
    \setlength\tabcolsep{6.0pt}
        \begin{tabular}{l|rrrrr}
        \noalign{\hrule height 0.75pt}
          & {\sc BitcoinAlpha}  & {\sc BitcoinOTC} & {\sc Escorts} & {\sc WikiElect} & {\sc Epinions}  \\ \hline
TDLG		&92.16 $\pm$ 00.52	&93.42 $\pm$ 00.29	&75.44 $\pm$ 00.29	&90.83 $\pm$ 00.06	&91.91 $\pm$ 00.03	\\
TDLG (Dense)	&80.35 $\pm$ 00.65	&83.23 $\pm$ 00.28	&68.56 $\pm$ 00.30	&65.16 $\pm$ 00.24	&70.79 $\pm$ 00.10	\\
CTDNE		&80.92 $\pm$ 00.61	&65.93 $\pm$ 00.55	&53.43 $\pm$ 00.55	&53.31 $\pm$ 00.24	&51.71 $\pm$ 00.09	\\
TIMERS		&67.65 $\pm$ 00.40	&69.39 $\pm$ 00.50	&62.20 $\pm$ 00.46	&66.16 $\pm$ 00.20	&67.57 $\pm$ 00.08	\\ \hline
EvolveGCN	&71.29 $\pm$ 00.84	&74.50 $\pm$ 00.58	&69.62 $\pm$ 00.31	&64.51 $\pm$ 00.16	&NA	\\
DynGEM		&73.10 $\pm$ 00.79	&75.70 $\pm$ 00.57	&65.90 $\pm$ 00.37	&65.91 $\pm$ 00.26	&NA	\\
GCRN		&65.59 $\pm$ 00.70	&71.66 $\pm$ 00.64	&60.00 $\pm$ 00.15	&58.17 $\pm$ 00.22	&NA	\\
VGRNN		&68.55 $\pm$ 00.99	&67.24 $\pm$ 01.00	&65.40 $\pm$ 00.54	&64.10 $\pm$ 00.24	&NA	\\
        \noalign{\hrule height 0.75pt}
\end{tabular}
\end{small}
\end{table}

\begin{table}[!ht]
\caption{\label{tab:full_res_intlinkpred} Complete results for temporal link outlier detection (interpolative): mean test AUC (\%) and 95\% confidence intervals.}
    \centering
    \begin{small}
    \setlength\tabcolsep{6.0pt}
        \begin{tabular}{l|rrrrr}
        \noalign{\hrule height 0.75pt}
          & {\sc BitcoinAlpha}  & {\sc BitcoinOTC} & {\sc Escorts} & {\sc WikiElect} & {\sc Epinions}  \\ \hline
TDLG		&92.95 $\pm$ 00.30	&93.52 $\pm$ 00.12	&79.06 $\pm$ 00.49	&98.44 $\pm$ 00.05	&76.99 $\pm$ 00.07	\\
CTDNE		&49.09 $\pm$ 00.82	&49.45 $\pm$ 00.49	&53.53 $\pm$ 00.45	&53.08 $\pm$ 00.27	&48.94 $\pm$ 00.28	\\
TIMERS		&80.63 $\pm$ 00.75	&79.77 $\pm$ 00.79	&59.48 $\pm$ 00.69	&90.12 $\pm$ 00.24	&73.27 $\pm$ 00.11	\\ \hline
EvolveGCN	&86.66 $\pm$ 01.61	&86.94 $\pm$ 02.08	&67.61 $\pm$ 02.39	&93.11 $\pm$ 00.47	&NA	\\
DynGEM		&77.45 $\pm$ 01.01	&77.82 $\pm$ 01.37	&57.71 $\pm$ 02.26	&88.91 $\pm$ 02.69	&NA	\\
        \noalign{\hrule height 0.75pt}
\end{tabular}
\end{small}
\end{table}

\begin{table}[!ht]
\caption{\label{tab:full_res_extlinkpred} Complete results for temporal link prediction (extrapolative): mean test AUC (\%) and 95\% confidence intervals.}
    \centering
    \begin{small}
    \setlength\tabcolsep{6.0pt}
        \begin{tabular}{l|rrrrr}
        \noalign{\hrule height 0.75pt}
          & {\sc BitcoinAlpha}  & {\sc BitcoinOTC} & {\sc Escorts} & {\sc WikiElect} & {\sc Epinions}  \\ \hline
TDLG		&96.82 $\pm$ 01.87	&81.92 $\pm$ 01.79	&64.84 $\pm$ 00.99	&71.18 $\pm$ 01.23	&84.59 $\pm$ 00.63	\\
CTDNE		&57.06 $\pm$ 10.00	&35.82 $\pm$ 05.28	&25.67 $\pm$ 00.38	&24.57 $\pm$ 00.58	&31.83 $\pm$ 00.50	\\
TIMERS		&57.15 $\pm$ 04.76	&62.48 $\pm$ 04.29	&55.88 $\pm$ 00.86	&52.57 $\pm$ 00.63	&59.08 $\pm$ 00.65	\\ \hline
EvolveGCN	&56.83 $\pm$ 14.45	&77.16 $\pm$ 04.36	&54.89 $\pm$ 01.27	&60.96 $\pm$ 02.95	&NA	\\
DynGEM		&39.75 $\pm$ 28.75	&71.86 $\pm$ 06.33	&51.65 $\pm$ 02.86	&53.49 $\pm$ 04.95	&NA	\\
        \noalign{\hrule height 0.75pt}
\end{tabular}
\end{small}
\end{table}

\end{document}